\documentclass{article}

\usepackage{microtype}
\usepackage{graphicx}
\usepackage{subcaption}
\usepackage{booktabs}
\usepackage{multirow}
\usepackage{xcolor}
\usepackage{hyperref}

\usepackage[preprint]{icml2026}
\AtBeginDocument{\hypersetup{pdfsubject={arXiv preprint}}}

\usepackage{amsmath}
\usepackage{amssymb}
\usepackage{mathtools}
\usepackage{amsthm}
\usepackage{bm}
\usepackage{enumitem}
\usepackage{float}
\usepackage{tikz}
\usetikzlibrary{arrows.meta,positioning,calc,shapes.misc}
\usepackage[capitalize,noabbrev]{cleveref}

\theoremstyle{plain}

\newtheorem{proposition}{Proposition}
\newtheorem{corollary}{Corollary}
\theoremstyle{definition}
\newtheorem{definition}{Definition}

\theoremstyle{remark}

\newcommand{\CG}{\mathrm{CG}}
\newcommand{\E}{\mathbb{E}}
\newcommand{\Var}{\operatorname{Var}}
\newcommand{\Cov}{\operatorname{Cov}}
\newcommand{\dob}{\operatorname{do}}
\newcommand{\bone}{b_{1}}
\newcommand{\bint}{\hat b_{\mathrm{int}}}
\newcommand{\bobs}{\hat b_{\mathrm{obs}}}
\newcommand{\IPT}{IPT}
\newcommand{\Vz}{\ensuremath{\mathrm{V0}}}
\newcommand{\Vo}{\ensuremath{\mathrm{V1}}}
\newcommand{\Vt}{\ensuremath{\mathrm{V2}}}
\definecolor{hl}{RGB}{43,108,176}
\definecolor{neg}{RGB}{197,48,48}

\icmltitlerunning{Forecasting Associations, Not Interventions}

\begin{document}
\raggedbottom

\twocolumn[
\icmltitle{Forecasting Associations, Not Interventions: A Preregistered Causal Audit\\ of Covariate-Aware Time Series Foundation Models}

\begin{icmlauthorlist}
\icmlauthor{Zong-Han Bai}{nthu}
\end{icmlauthorlist}
\icmlaffiliation{nthu}{National Tsing Hua University, Hsinchu, Taiwan}
\icmlcorrespondingauthor{Zong-Han Bai}{hu110048138@gapp.nthu.edu.tw}
\icmlkeywords{time series foundation models, causal inference, interventions, confounding, in-context learning, preregistration}
\vskip 0.3in
]

\printAffiliationsAndNotice{}

\begin{abstract}
Covariate-aware time series foundation models (TSFMs) accept a future path for a control variable and return a forecast, which invites \emph{what-if} queries: how would the outcome change if the control were set differently? The query is interventional; the models are trained on observational data. We determine whether their answers are the conditional slope $\partial\,\E[Y\mid X]$ or the interventional effect $\partial\,\E[Y\mid \dob(X)]$, at three levels. \emph{Theory}: a Bayes-optimal observational predictor answers interventional queries associationally unless the adjustment set is both available and used; for in-context least squares we derive the resulting \emph{confounding gap}---the difference between a model's effect estimate under an observational and under a randomized context---in closed form from context moments alone, giving a parameter-free reference line. \emph{Behaviour}: on a preregistered grid with exact counterfactual ground truth, Chronos-2, TiRex-2, TabPFN-TS and TimesFM-2.5 all track this line ($R^2$ 0.90--1.00) and their immediate responses are \emph{more} confounded than regression itself (slopes 1.3--1.8), on a nonlinear structural model as on the linear one. Pairs of structural models with identical observational laws prove this is not an identifiability limit: when the effect becomes identifiable, least squares recovers 100\% of the true effect difference, Chronos-2 75\%, TiRex-2 34\%, and the models separate into reproducible failure modes. \emph{Representation}: linear probes decode the data-generating regime (94--96\%) and the true effect ($R^2$ 0.7--0.9) from Chronos-2 and TiRex-2 embeddings that their forecasts ignore. On two physical systems with randomized actuators, pretrained models under-respond to interventions by 2--5$\times$ and ignore in-window confounding. Minutes of interventional post-training on 3{,}000 synthetic series re-map the latent information, cutting the gap by up to 83\% and transferring outside the training prior and to the nonlinear model. We show that the forecasting cost of doing so is a replay-distribution effect: replaying series from the model's own pretraining prior cuts it from $+47\%$ to $+10\%$ MASE, and with low-rank adaptation yields a recipe that meets the preregistered success criterion over three seeds---a 52--61\% gap reduction at a $+1$--$3\%$ forecasting cost---while improving the external DoTime benchmark. The preregistered protocol, the reference line, the failure-mode classifier and the recipe will be released.
\end{abstract}

% =====================================================================
\section{Introduction}
\label{sec:intro}
\begin{figure*}[t]
\centering
\includegraphics[width=\textwidth]{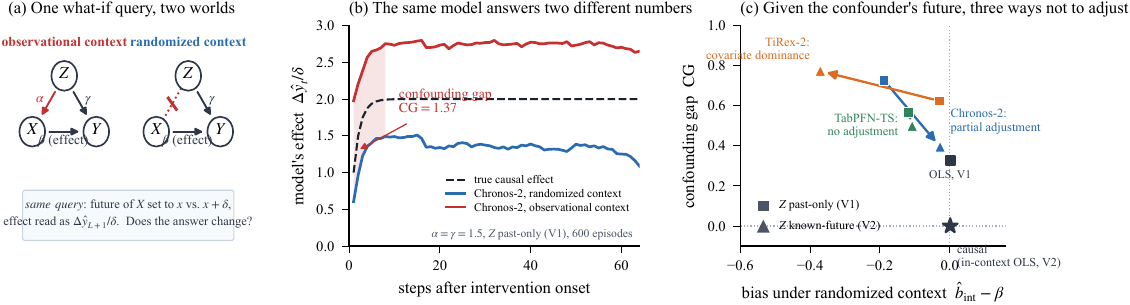}
\caption{\textbf{Foundation models forecast associations, not interventions.} (a)~One what-if query is posed under two contexts from the same structural model: observational ($Z$ drives both $X$ and $Y$) and randomized (the edge is cut). A model that estimates the effect answers the same number in both. (b)~Chronos-2 does not: under observational context its response to $\dob(X{+}\delta)$ is inflated by the confounding. The difference at the first intervened step is the \emph{confounding gap}, derived in closed form for in-context least squares (\cref{prop:cg}). (c)~Supplying the confounder's future should close the gap (star: least squares does); three models fail in three ways.}
\label{fig:overview}
\end{figure*}

The second generation of time series foundation models (TSFMs) is \emph{covariate-aware}: Chronos-2 \citep{ansari2025chronos2}, TiRex-2 \citep{tirex2_2026}, TabPFN-TS \citep{hoo2025tables} and TimesFM-2.5 \citep{das2024decoder} accept covariates alongside the target and condition the forecast on a user-supplied future path of those covariates. This is why practitioners reach for them in planning: supply the price schedule, the dosing plan or the actuator setting one intends to apply, and compare the forecast with the forecast under a different plan. The difference is read as the \emph{effect} of the plan. That reading is an interventional query---level two of Pearl's hierarchy \citep{pearl2009causality,bareinboim2022hierarchy}---posed to a model whose training objective is observational next-value prediction. Whether the answer is the effect of $\dob(X)$ or the conditional association between $X$ and $Y$ decides whether the plan will work. This paper settles the question for the four models above and gives the tools to settle it for the next one.

The closest prior work asks whether a model \emph{responds correctly}: \citet{won2026attenuated} shows that covariate-aware TSFMs attenuate and distort their dynamic response to counterfactual inputs on forced engineering systems, and that a short synthetic fine-tune restores the response magnitude. We ask whether a model responds \emph{for the right reason}. A model can respond with the correct magnitude and still be reading an association, and magnitude alone cannot tell the two apart; the failure we isolate is in fact the opposite of attenuation (\cref{sec:line}). The question cannot be settled on real data alone: on observational data the two answers differ only through confounding, and on experimental data they coincide. Our instrument holds the \emph{query} fixed and varies the \emph{regime} that generated the context. The \textbf{confounding gap} is the difference between a model's effect estimate when the context comes from an observational regime (a common cause $Z$ drives both the control $X$ and the outcome $Y$) and when it comes from a randomized regime ($X$ exogenous), everything else equal. A model that identifies the effect answers the same number in both regimes; a model that reads the association answers the observational slope in the first and the true effect in the second. The gap therefore measures, without ground truth, how much of a what-if answer is association. For in-context least squares we derive it in closed form from context moments: a parameter-free \emph{reference line} on which any model can be placed.

Against this reference we ask three questions. \emph{Identification}: is the effect identifiable from what the model is given? We vary the visibility of the confounder (hidden, past-only, known-future) and construct pairs of structural models with identical observational laws, so that ``the model answers the same number'' is correct in the non-identifiable case and a failure in the identifiable one. \emph{Utilisation}: when it is identifiable, does the model use the identifying information? Four TSFMs with four architectures do not: all lie on or above the reference line, on a linear and on a nonlinear structural model, and split into distinct failure modes when the confounder's future is supplied, while linear probes show that regime and effect are decodable from the encoders---present and unused. \emph{Learning}: what supervision changes the answer, at what cost, and does it transfer? Interventional post-training on a few thousand synthetic series removes most of the gap, inside and outside the training prior and on the nonlinear model. Its forecasting cost is a replay-distribution effect, not a price of causal supervision: with replay from the pretraining prior and low-rank adaptation, the recipe meets the preregistered success criterion.

\paragraph{Contributions.}
(i)~A theory-anchored audit: the confounding gap, a Bayes-optimality argument for why observational pretraining yields associational what-if answers (\cref{prop:bayes}), a closed-form reference line for in-context least squares under three confounder visibilities (\cref{prop:cg}), verified to $R^2=1.000$ by an explicit least-squares control, and a second reference (\cref{prop:static}) that predicts the magnitude of the models' excess confounding.
(ii)~Preregistered evidence that four covariate-aware TSFMs are associational---in their immediate response \emph{super}-associational---on linear and nonlinear structural models, a reproducible classifier of failure modes, and an identifiability split that separates ``cannot know'' from ``does not use''.
(iii)~A representational account: regime and effect are linearly decodable from Chronos-2 and TiRex-2 embeddings, and post-training changes the encoder, not only the readout.
(iv)~The first confounding audit of TSFMs on real physical systems with randomized actuators and an identified causal effect, and a complete repair trade-off (nineteen post-training variants, three inference-time repairs, two architectures) under a frozen preregistration, ending in a recipe that meets its success criterion and in the diagnosis that makes it work: the forecasting cost of interventional post-training is a replay-distribution effect, removed by replaying the pretraining prior.

% =====================================================================
\section{Related Work}
\label{sec:related}

\paragraph{Covariate-aware TSFMs and their audits.}
Chronos-2 \citep{ansari2025chronos2} extends the Chronos family \citep{ansari2024chronos} with group attention over the target and its covariates and distinguishes past-only from known-future covariates; TiRex-2 \citep{tirex2_2026} does the same in an xLSTM backbone \citep{beck2024xlstm}; TabPFN-TS \citep{hoo2025tables} casts forecasting as tabular regression with a prior-fitted network \citep{muller2022transformers} and uses known-future covariates only; TimesFM-2.5 \citep{das2024decoder} is univariate and attaches covariates through XReg, an in-context linear regression whose residual the model forecasts. None is trained with interventional data or documents what a difference of covariate-conditioned forecasts estimates. \citet{won2026attenuated} is the closest work: it measures the response of Chronos-2, TimesFM-2.5 and TabPFN-TS to counterfactual inputs on forced synthetic and four measured engineering systems against classical system identification, finds attenuated and misshaped responses, and repairs them with context dither and a 26-minute fine-tune on 3{,}000 synthetic forced-system series, at a loss of univariate forecasting skill that mixing in real M4 series did not prevent. Its question is system identification: is the dynamic response right? Ours is causal: is the response an effect or an association? The two are complementary, and \cref{sec:repair} explains Won's forecasting cost and shows how to remove it. \citet{jander2026causal} intervene on the data generator to probe sensitivity, whereas we ask whether the answer to an interventional \emph{query} is causal. Leakage concerns \citep{leakage2025rethinking} motivate synthetic data with exact counterfactuals for our headline claims.

\paragraph{Causal benchmarks and causal foundation models.}
DoTime \citep{thumm2026dotime} and CausalTimePrior \citep{thumm2026priors} provide synthetic interventional benchmarks and train causal prior-fitted models from scratch; we use the frozen DoTime suite as an external test and show that an off-the-shelf forecaster can be post-trained in minutes rather than replaced. Causal Chambers \citep{gamella2025causal} supply physical systems with randomized actuators, which make the causal effect identifiable and let the confounding audit, not only the response, be run on real data. The closest conceptual neighbour is the observation that LLM-simulated experiments are observational studies in which ``intervening'' on a prompt drags latent attributes along \citep{illusion2026intervention}; our abduction drift (\cref{prop:cg}) is the time-series analogue \citep[see also][]{causalarena2026,kaddour2022causal,scholkopf2021toward}.

Transformers trained on function classes implement (ridge-regularised) least squares in context \citep{garg2022what,akyurek2023what,vonoswald2023transformers}, the hypothesis behind our reference line; linear probes \citep{alain2017understanding}, replay and low-rank adaptation \citep{kirkpatrick2017overcoming,rolnick2019experience,hu2022lora} enter \cref{sec:probe,sec:repair}.

% =====================================================================
\section{Framework}
\label{sec:framework}

\subsection{Setting}
A covariate-aware forecaster is a map $f(c,x_{\mathrm{fut}})$ from a context $c=(y_{1:L},x_{1:L},z_{1:L})$ and a future path $x_{\mathrm{fut}}=x_{L+1:L+H}$ of the control to a forecast $\hat y_{L+1:L+H}$; known-future covariates are supplied over the horizon, past-only covariates appear in $c$ only. A \emph{what-if query} supplies $x_{\mathrm{fut}}$ and $x_{\mathrm{fut}}+\delta$ and reads the forecast difference as the effect of setting $X$ to $x_{\mathrm{fut}}+\delta$. Our ground-truth generator is the lagged linear-Gaussian structural model
\begin{equation}
\label{eq:scm}
\begin{aligned}
Z_t&=\rho_z Z_{t-1}+\varepsilon^z_t,\qquad X_t=\rho_x X_{t-1}+\alpha Z_{t-1}+\varepsilon^x_t,\\
Y_t&=\rho_y Y_{t-1}+\beta X_t+\gamma Z_{t-1}+\varepsilon^y_t,
\end{aligned}
\end{equation}
with independent Gaussian noise, so that $Z$ is a lagged common cause of $X$ and $Y$ (back-door path $X\leftarrow Z\rightarrow Y$) and the interventional effect of a unit shift of $X_t$ on $Y_t$ is $\beta$. The \emph{observational regime} has $\alpha\neq0$; the \emph{randomized regime} sets $\alpha=0$, which is what an experiment would produce. Under $\dob(X_{\mathrm{fut}}\leftarrow x_{\mathrm{fut}}+\delta)$ the future of $Z$ is unaffected because $Z$ is a cause, not an effect, of $X$; simulating both arms with shared noise gives the exact counterfactual effect path $\Delta y$ \citep{pearl2009causality}.

\paragraph{Confounder visibility.}
\Vz: $Z$ absent from $c$. \Vo: $Z$ in $c$ as a past-only covariate (the common deployment setting). \Vt: $Z$ in $c$ and over the horizon as a known-future covariate, so that the back-door adjustment set is available at query time. \Vz\ is not identifying and serves as a check on the theory.

\subsection{Effect readouts}
Let $\Delta\hat y=f(c,x_{\mathrm{fut}}+\delta)-f(c,x_{\mathrm{fut}})$. We report the \textbf{impact coefficient} $\bone=\Delta\hat y_{L+1}/\delta$ (first intervened step; the quantity \cref{prop:cg} predicts) and the \textbf{cumulative coefficient} $\hat b=\langle\Delta\hat y,\Delta y\rangle/\langle\Delta y,\Delta y\rangle$, the projection of the model's effect path on the true counterfactual path. Sensitivity $S=\lVert\Delta\hat y\rVert/\lVert\Delta y\rVert$ and direction accuracy are reported for comparability with prior work \citep{won2026attenuated,thumm2026dotime}.

\subsection{The confounding gap}
\begin{definition}[Confounding gap]
\label{def:cg}
For a query $\dob(\delta)$ and a model $f$, $\CG(f)=\E_{\mathrm{obs}}[\bone]-\E_{\mathrm{int}}[\bone]$: the difference between the model's effect estimate when the context is drawn from the observational regime and when it is drawn from the randomized regime, all else (structural model, query, visibility) equal.
\end{definition}
A model that identifies $\beta$ from $c$ has $\CG=0$; a model that reads the observational association answers $\beta+\text{bias}$ and $\beta$ respectively, so $\CG$ equals the bias: a ground-truth-free measurement of how much of the answer is association. Two properties fix its use. $\CG=0$ is necessary, not sufficient: a model that ignores $X$ altogether also has $\CG=0$, so we pair the gap with the \textbf{randomized-context bias} $\bint-\beta$ and the sensitivity $S$; the pair $(\bint-\beta,\ \CG)$ is the \emph{causal compass} of \cref{fig:overview}c, on which a causal predictor sits at the origin, an associational one on the vertical axis, and an unresponsive one at $(-\beta,0)$. And the gap needs two comparable contexts, one confounded and one not: generated on synthetic data, obtained from a regime variable on real data (\cref{sec:real}).

\subsection{What an observational predictor answers}
\begin{proposition}[Observational predictors answer associationally]
\label{prop:bayes}
Let $f$ be Bayes-optimal for an observational pretraining distribution, $f(c,x_{\mathrm{fut}})=\E[Y_{\mathrm{fut}}\mid c,X_{\mathrm{fut}}=x_{\mathrm{fut}}]$. Then
\begin{align*}
f(c,x+\delta)-f(c,x)={}&\E[Y_{\mathrm{fut}}\mid c,\dob(x+\delta)]\\&-\E[Y_{\mathrm{fut}}\mid c,\dob(x)]+\mathcal{B}(c,x,\delta),
\end{align*}
where the first two terms form the interventional contrast and the confounding term $\mathcal{B}$ vanishes whenever the variables in $c$ that $f$ conditions on block every back-door path from $X_{\mathrm{fut}}$ to $Y_{\mathrm{fut}}$ (Pearl's back-door criterion relative to $c$). In the model \eqref{eq:scm}, $\mathcal{B}\neq0$ under \Vz\ and \Vo\ whenever $\alpha\gamma\neq0$, and equals the expressions of \cref{prop:cg}.
\end{proposition}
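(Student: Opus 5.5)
The decomposition itself is close to a tautology once $\mathcal{B}$ is defined as a residual. The plan is to define
\[
\mathcal{B}(c,x,\delta)=\bigl(f(c,x+\delta)-f(c,x)\bigr)-\bigl(\E[Y_{\mathrm{fut}}\mid c,\dob(x+\delta)]-\E[Y_{\mathrm{fut}}\mid c,\dob(x)]\bigr)
\]
and then establish its two claimed properties: that it vanishes under back-door adjustment, and that it is nonzero in \eqref{eq:scm} under \Vz\ and \Vo.

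For the vanishing claim, let $S\subseteq c$ be the variables $f$ conditions on, and suppose $S$ blocks every back-door path from $X_{\mathrm{fut}}$ to $Y_{\mathrm{fut}}$. Two further facts are needed: no element of $S$ is a descendant of $X_{\mathrm{fut}}$, and the remaining context variables carry no additional back-door information. The first holds because $c$ consists of past values, which $X_{\mathrm{fut}}$ cannot cause. The second is, I expect, the delicate step. I would handle it by reading ``the variables $f$ conditions on'' as meaning that the Bayes-optimal $f$ depends on $c$ only through $S$, so that $\E[Y_{\mathrm{fut}}\mid c,x]=\E[Y_{\mathrm{fut}}\mid S,x]$. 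With this in place, the back-door adjustment formula (rule~2 of the do-calculus) gives $\E[Y_{\mathrm{fut}}\mid S,\dob(x)]=\E[Y_{\mathrm{fut}}\mid S,X_{\mathrm{fut}}=x]$, since $Y_{\mathrm{fut}}\perp X_{\mathrm{fut}}\mid S$ in the graph with the outgoing edges of $X_{\mathrm{fut}}$ removed. Applying this at $x$ and at $x+\delta$ and subtracting yields $\mathcal{B}=0$.

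For the nonvanishing claim in \eqref{eq:scm}, the model is jointly Gaussian, so both the conditional and the interventional expectations are linear in $x$, and $\mathcal{B}$ reduces to $\delta$ times the difference of two slopes. Under $\dob$, $Z_{\mathrm{fut}}$ is unchanged, so the interventional impact slope at step $L+1$ is exactly $\beta$. On the observational side, the $Y_{L+1}$ equation gives
\[
\partial_x\E[Y_{L+1}\mid c,X_{L+1}=x]=\beta+\gamma\,\partial_x\E[Z_L\mid c,X_{L+1}=x].
\]
Under \Vz, $Z_L$ is latent, and by Gaussian conditioning $\partial_x\E[Z_L\mid c,x]=\Cov(Z_L,X_{L+1}\mid c)/\Var(X_{L+1}\mid c)$. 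Because $X_{L+1}=\rho_xX_L+\alpha Z_L+\varepsilon^x_{L+1}$, this equals $\alpha\Var(Z_L\mid c)/(\alpha^2\Var(Z_L\mid c)+\sigma_x^2)$, which is nonzero when $\alpha\neq0$ since $\Var(Z_L\mid c)>0$. Hence $\mathcal{B}\neq0$ whenever $\alpha\gamma\neq0$.

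Under \Vo\ at the first step, $Z_L$ is observed in $c$, so the $Z_{L-1}$ contribution is blocked at $L+1$. The bias instead enters at later horizon steps through the unobserved $Z_{L+1:L+H-1}$. For these I would run the same Gaussian-conditioning calculation with $\varepsilon^z_{L+1}$, which is revealed through $X_{L+2}$. This gives a nonzero term proportional to $\alpha\gamma$, and this is presumably the form taken by the expressions of \cref{prop:cg}.

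The main obstacle is the identification of $\mathcal{B}$ with the expressions of \cref{prop:cg}. Those expressions are stated for in-context least squares, which uses sample moments, whereas the argument above uses population Bayes quantities. I would close this gap by noting that the in-context least-squares coefficient on $X$ converges to the population regression coefficient, so the two agree in the population (large-$L$) limit. I would state that limit explicitly rather than claim exact equality.
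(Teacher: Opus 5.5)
Your residual definition of $\mathcal{B}$ and your back-door step are the paper's argument. Your explicit Gaussian computation of non-vanishing is sharper than the paper's remark that a back-door path is open, since an open path alone does not force $\mathcal{B}\neq0$. It also correctly shows that under \Vo\ the Bayes-optimal bias is zero at step $L+1$ because $Z_L\in c$, so the paper's ``$h\ge1$'' should read $h\ge2$.

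The genuine gap is your last step. Letting $L\to\infty$ takes in-context least squares to the population coefficients of the regression on the regressors it actually uses, $(1,Y_{t-1},X_t)$ under \Vz. It does not take it to the slope of $\E[\,\cdot\mid c,X_{\mathrm{fut}}]$, which is a Kalman-smoother gain for the latent $Z_L$ computed from the whole history. No limit in $L$ removes a difference in conditioning sets, and here the two quantities genuinely differ:
\begin{itemize}
\item In the randomized regime the Bayes-optimal $\mathcal{B}$ is exactly zero: given $c$, the future $X$ path is $X_L$ propagated plus fresh $\varepsilon^x$, conditionally independent of $Z$. By contrast, \cref{prop:cg} contains $\pi_X(\mathrm{int})\neq0$, because conditioning on the collider $Y_{t-1}$ without $X_{t-1}$ opens $X_t\leftarrow X_{t-1}\to Y_{t-1}\leftarrow Z_{t-2}\to Z_{t-1}\to Y_t$.
\item Under \Vo\ your own calculation gives $\mathcal{B}=0$ at the impact step. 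But $\gamma g(\mathrm{obs})$ is an impact-step term produced by the reference predictor's plug-in rule $\hat Z_{t-1}=gX_t$, which the paper calls a modelling choice. So your later-step term is not ``the form'' of it.
\end{itemize}

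What does hold is your decomposition applied to the reference predictor itself. Conditioning on $(Y_{t-1},X_t)$ only, the omitted-variable formula gives $\mathcal{B}/\delta=\gamma\pi_X(\text{regime})$ in each regime, and their difference is the \Vz\ gap; under \Vo\ the plug-in rule gives $\gamma g$. The paper's proof tacitly relies on this reading when it asserts in one line that the open-path expressions ``are the terms derived in \cref{prop:cg}''. It does not derive them from the Bayes-optimal predictor either. So state the final clause for that predictor rather than as a large-$L$ limit.

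There are two smaller points:
\begin{itemize}
\item Your $\mathcal{B}$ keeps $\E[Y_{\mathrm{fut}}\mid c,\dob(\cdot)]$ with the full $c$, while your back-door identity is for $S$. Either write $\mathcal{B}$ with $S$ throughout, as the paper implicitly does with its $C$, or check that $c$ itself is admissible.
\item The query shifts the whole future path, so $f$ conditions on all of $X_{L+1:L+H}$, and the later entries also inform $Z_L$ through $\rho_z$. Your single-step slope $\alpha\Var(Z_L\mid c)/(\alpha^2\Var(Z_L\mid c)+\sigma_x^2)$ must be replaced by the corresponding smoother quantity, which stays nonzero for the positive grid parameters.
\end{itemize}
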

\Cref{prop:bayes} (proof in \cref{app:proofs}) is architecture-free: the \emph{objective} used to pretrain TSFMs makes associational what-if answers the default, and the only exits are to change the training distribution (\cref{sec:repair}) or to supply the adjustment set and have the model use it (\Vt). It is also model-free in the other direction: for \emph{any} structural model in which the horizon values of the supplied covariates block every back-door path, a predictor that conditions on them has $\CG=0$, so the \Vt\ prediction of zero gap holds beyond the linear family (\cref{cor:general}). \Cref{prop:cg} supplies the reference against which to measure which exit a model takes.

\begin{proposition}[Confounding gap of in-context least squares]
\label{prop:cg}
Let $f$ fit $Y_t$ by least squares on the regressors visible in $c$ and apply the fitted coefficients to the horizon. As $L\to\infty$:
\begin{itemize}[leftmargin=1.2em,itemsep=1pt,topsep=1pt]
\item[\Vz] $\bone\to\beta+\gamma\,\pi_X(\text{regime})$, where $\pi_X$ is the coefficient of $X_t$ in the population regression of the omitted $Z_{t-1}$ on $(1,Y_{t-1},X_t)$ (Frisch--Waugh--Lovell). $\CG=\gamma[\pi_X(\mathrm{obs})-\pi_X(\mathrm{int})]$; note $\pi_X(\mathrm{int})\neq0$ because $Z_{t-1}$ remains correlated with the lagged regressor $Y_{t-1}$.
\item[\Vo] the coefficient on $X$ is $\beta$ in both regimes, but the unknown future of $Z$ is imputed through the in-context association $g(\text{regime})=\Cov(Z_{t-1},X_t)/\Var(X_t)$, so the effective response is $\beta+\gamma g$ with $g(\mathrm{int})=0$: $\CG=\gamma g(\mathrm{obs})$. This is the \emph{abduction drift}: intervening on $X$ drags the model's belief about $Z$.
\item[\Vt] with $Z$'s future supplied and unchanged under $\dob(\delta)$, $\bone=\beta$ in both regimes and $\CG=0$.
\end{itemize}
All three quantities are functions of context sample moments alone.
\end{proposition}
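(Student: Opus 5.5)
The plan is to treat each visibility case as a population least-squares problem. As $L\to\infty$, stationarity and ergodicity of the stable VAR \eqref{eq:scm} (assuming $|\rho_z|,|\rho_x|,|\rho_y|<1$) make the sample moments of $(Y_t,Y_{t-1},X_t,Z_{t-1},Z_t)$ converge almost surely to their population values. The in-context OLS coefficients are continuous functions of these moments, so by the continuous mapping theorem they converge to the population projection coefficients. This already gives the closing claim that all three quantities are functions of context sample moments alone: each is a ratio of entries of the moment matrix of the visible regressors. The remaining work is to identify the population coefficient, and hence $\bone$, in each case. For $\bone$ we differentiate the model's first-step forecast with respect to $x_{L+1}$. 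Here $Y_L$ and $Z_L$ are fixed by the context and unaffected by the intervention, since $Z_L$ precedes $X_{L+1}$.

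\textbf{Case \Vz.} The regression of $Y_t$ on $(1,Y_{t-1},X_t)$ omits $\gamma Z_{t-1}$. Write the true equation as $Y_t=\rho_yY_{t-1}+\beta X_t+\gamma Z_{t-1}+\varepsilon^y_t$, with $\varepsilon^y_t$ orthogonal to all regressors. The omitted-variable formula (equivalently Frisch--Waugh--Lovell) then gives the coefficient on $X_t$ as $\beta+\gamma\pi_X$. Here $\pi_X$ is the $X_t$-coefficient in the projection of $Z_{t-1}$ on $(1,Y_{t-1},X_t)$. The fitted $X$-coefficient is exactly $\bone$, and subtracting the two regimes gives the stated $\CG$.

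For the remark that $\pi_X(\mathrm{int})\neq0$, we compute under $\alpha=0$. Then $X_t$ is independent of $Z_{t-1}$ marginally, but $Y_{t-1}$ loads on $Z_{t-2}$ (correlated with $Z_{t-1}$) and on $X_{t-1}$ (correlated with $X_t$ when $\rho_x\neq0$). Partialling out $Y_{t-1}$ therefore induces a nonzero partial correlation between $Z_{t-1}$ and $X_t$ in general. We will exhibit the sign-generic expression rather than prove nonvanishing for every parameter value.

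\textbf{Case \Vo.} The regressors $(1,Y_{t-1},X_t,Z_{t-1})$ contain the full parent set of $Y_t$, so the population coefficients are exactly $(\rho_y,\beta,\gamma)$ in both regimes. The coefficient on $X$ is therefore $\beta$. The subtle point is that $Z_{t-1}$ enters with a lag, so at the first step $Z_L$ is known; the missing future of $Z$ affects only later steps or models that impute contemporaneous values. I would formalise "imputing the unknown future of $Z$" as follows. The forecaster replaces the unavailable covariate value by its in-context linear prediction from $X$, $\hat Z=g\,X$ with $g=\Cov(Z_{t-1},X_t)/\Var(X_t)$. Substituting this into the fitted equation makes the effective derivative in $x$ equal to $\beta+\gamma g$. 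Under $\alpha=0$, $Z_{t-1}$ and $X_t$ are independent, so $g(\mathrm{int})=0$ and $\CG=\gamma g(\mathrm{obs})$.

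\textbf{Case \Vt.} Here the regression is the same as in \Vo, but the horizon values of $Z$ are supplied and held fixed across both arms of the query. The derivative is therefore the $X$-coefficient $\beta$ in both regimes, so $\CG=0$. This is consistent with \cref{prop:bayes}, since $Z_{t-1}$ blocks the back-door path.

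The main obstacle is \Vo. Taken literally, with lag-one $Z$ and $Z_L$ observed, the OLS forecaster applied to the horizon gives $\bone=\beta$. The abduction-drift term only arises under an explicit imputation rule for unobserved horizon covariates. I would make that rule precise, namely marginal linear imputation of the unseen $Z$ from the supplied $X$ path. I would state that the result holds for this plug-in least-squares predictor, and check that it reproduces $\mathcal{B}$ in \cref{prop:bayes}. A secondary technical step is justifying the ergodic limit, which I would handle by citing standard results for stable Gaussian VARs.
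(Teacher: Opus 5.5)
Your proposal is correct and follows essentially the paper's own proof. Both rest on the same ingredients:
\begin{itemize}
\item population least squares, justified by stationarity and ergodicity;
\item the omitted-variable/Frisch--Waugh--Lovell formula for \Vz;
\item for \Vo, the full-parent-set regression with coefficients $(\rho_y,\beta,\gamma)$ plus the plug-in imputation $\hat Z=gX$, which the paper, too, calls a modelling choice of the reference predictor;
\item for \Vt, the supplied horizon values of $Z$, unchanged by the intervention.
\end{itemize}
Your caveats are warranted. The \Vo\ formula for $\bone$ needs the imputation applied to the lagged regressor $Z_{t-1}$ at every horizon step, including $t=L+1$; this is what the paper's least-squares control does, which is why it shows a nonzero first-step gap. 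Also, $\pi_X(\mathrm{int})\neq0$ only when $\beta\rho_x\gamma\rho_z\neq0$; this holds on the paper's grid but not in general.
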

An explicit least-squares forecaster reproduces \cref{prop:cg} with slope $1.000$ and $R^2=1.000$ across visibilities (\cref{fig:refline}, right), validating both the proposition and the measurement harness before any foundation model is run.

\begin{proposition}[Shrinkage]
\label{prop:shrink}
If the in-context regression is ridge-regularised with penalty $\lambda$ \citep{vonoswald2023transformers}, the coefficient on the partialled-out control is multiplied by $v/(v+\lambda/L)$, $v$ its partial variance: under randomized context $S<1$, increasing in $L$ and $v$; under observational context the confounding term of \cref{prop:cg} does not vanish with $L$.
\end{proposition}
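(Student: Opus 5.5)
The plan is to reduce the ridge problem to a one-dimensional regression by the Frisch--Waugh--Lovell device already used for \cref{prop:cg}, and then read off both regimes from the resulting scalar formula. Write the in-context design as $W=[X,R]$, where $X$ is the control column and $R$ collects the remaining visible regressors (intercept, $Y_{t-1}$, and $Z_{t-1}$ under \Vo/\Vt). I would penalise only the coefficient on $X$ after partialling out $R$, taking $\tilde X=M_R X$ with $M_R$ the residual-maker of $R$. Penalising all coefficients changes this only through an $O(\lambda/L)$ perturbation of $M_R$, which I would handle at the end. The scalar ridge solution is then
\begin{equation*}
\hat\beta_\lambda=\frac{\tilde X^\top Y}{\tilde X^\top\tilde X+\lambda}=\frac{\hat v}{\hat v+\lambda/L}\cdot\frac{\tilde X^\top Y}{\tilde X^\top\tilde X},\qquad \hat v=\tfrac1L\tilde X^\top\tilde X .
\end{equation*}
The second factor is the unpenalised least-squares coefficient $\hat\beta_0$. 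This gives the multiplicative factor $v/(v+\lambda/L)$ directly, once $\hat v\to v$, the population partial variance, by the ergodic law of large numbers for the stable AR system \eqref{eq:scm}.

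For the randomized regime ($\alpha=0$), $X$ is exogenous, and \cref{prop:cg} gives $\hat\beta_0\to\beta$. The abduction term vanishes because $g(\mathrm{int})=0$, and under \Vz\ the residual $\pi_X(\mathrm{int})$ term is handled the same way, so the impact response is $\bone\approx\beta\,v/(v+\lambda/L)$. The true counterfactual path is generated by $\beta$, so $S$ is this ratio; for the cumulative path the same factor propagates linearly through the fitted $\rho_y$ recursion. The ratio is $<1$ for $\lambda>0$, and since $\partial_L[v/(v+\lambda/L)]>0$ and $\partial_v[\cdot]>0$, monotonicity follows.

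For the observational regime, I would apply the same decomposition. The total effective response is the shrunk coefficient plus the confounding term of \cref{prop:cg}: $\gamma\pi_X(\mathrm{obs})$ under \Vz, or $\gamma g(\mathrm{obs})$ under \Vo, the latter from the imputation of $Z$'s future. That term is a ratio of population moments and does not depend on $L$. Both parts may be shrunk by their own factors $v'/(v'+\lambda/L)$, and each such factor tends to $1$ as $L\to\infty$. So the limit is the nonzero quantity of \cref{prop:cg} whenever $\alpha\gamma\ne0$, and the confounding does not vanish with $L$ even though the shrinkage does.

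The main obstacle is making the first step rigorous when the full coefficient vector is penalised. Then $M_R$ is replaced by a ridge-modified projector and the factor is exactly $v/(v+\lambda/L)$ only up to $O((\lambda/L)^2)$ corrections. I would try either stating the result for the partialled penalty, which matches how \citet{vonoswald2023transformers}-style in-context updates act, or bounding the correction by a Neumann expansion of $(W^\top W/L+\lambda I/L)^{-1}$ around $(W^\top W/L)^{-1}$. A secondary issue is replacing sample moments by population moments, which needs a uniform LLN for the stationary Gaussian VAR; I would take this as standard.
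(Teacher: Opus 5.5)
Your route is the paper's own: a Frisch--Waugh--Lovell reduction, a ridge penalty on the residualised control giving $b_\lambda=b_{\mathrm{OLS}}\,v/(v+\lambda/L)$, and the two regimes read off \cref{prop:cg}. The observational half and the randomized \Vo/\Vt\ cases are fine as you wrote them.

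The step that fails is randomized \Vz. By \cref{prop:cg} itself, $\pi_X(\mathrm{int})\neq0$, because conditioning on $Y_{t-1}$ (a common effect of $X_{t-1}$ and $Z_{t-2}$) correlates $X_t$ with $Z_{t-1}$. So $b_{\mathrm{OLS}}\to\beta+\gamma\pi_X(\mathrm{int})$, and calling this term ``handled the same way'' does not remove it from $\bone$. The paper's proof takes the same shortcut when it writes $b_{\mathrm{OLS}}\to\beta$ under randomized context.

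The impact conclusion survives a sign computation. With $X\perp Z$, $\Cov(X_t,Y_{t-1})=\beta\rho_x\Var(X)/(1-\rho_x\rho_y)$ and $\Cov(Z_{t-1},Y_{t-1})=\gamma\rho_z\Var(Z)/(1-\rho_y\rho_z)$. Hence $\gamma\pi_X(\mathrm{int})=-\beta\gamma^2K$ with $K>0$ whenever $\rho_x\rho_z>0$. The impact ratio $(1-\gamma^2K)\,v/(v+\lambda/L)$ is therefore below 1 and increasing in $L$ and $v$, but its limit is $1-\gamma^2K$, not 1.

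Your extension to the horizon-norm $S$ does not survive under \Vz. The omitted $Z_{t-1}$ also biases the fitted $\rho_y$ upward, so the recursion inflates the path. The paper's own least-squares control has $S_{\mathrm{int}}=1.27$ there (\cref{tab:e1full}). Under \Vz, the claim $S<1$ holds only in the impact reading $S=\bone/\beta$ that the paper's proof uses.

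Separately, drop the Neumann-expansion fallback; it cannot deliver an $O((\lambda/L)^2)$ error. Penalise every coefficient and write $\epsilon=\lambda/L$, $\hat p$ for the coefficients of $X$ regressed on $R$, and $\hat\theta_R$ for the least-squares coefficients on $R$ in the joint fit. The $X$ coefficient is then $b_{\mathrm{OLS}}\,v/(v+\epsilon)+(\epsilon/v)\,\hat p^{\top}\hat\theta_R+O(\epsilon^2)$. This correction is first order, the same order as the shrinkage itself, and generically nonzero: the shrinkage of $\hat\rho_y$ leaks into the $X$ coefficient through $\Cov(X_t,Y_{t-1})\neq0$. The exact factor $v/(v+\lambda/L)$ is a statement about the coordinate-wise penalty. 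That is your first option, and it is exactly what the paper assumes (``ridge regression with penalty $\lambda$ on that coordinate'').
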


\begin{proposition}[Static-regression reference]
\label{prop:static}
Let $f_{\mathrm{s}}$ regress $Y_t$ on $(1,X_t)$ (\Vz) or $(1,X_t,Z_{t-1})$ (\Vo, \Vt) \emph{without} the autoregressive term and apply the slope to the horizon. In \eqref{eq:scm}, under \Vz,
\[
b_{\mathrm{s}}=\beta+\gamma\,\frac{\Cov(Z_{t-1},X_t)}{\Var(X_t)}+\rho_y\,\frac{\Cov(Y_{t-1},X_t)}{\Var(X_t)},
\]
so the static gap $\CG_{\mathrm{s}}=b_{\mathrm{s}}(\mathrm{obs})-b_{\mathrm{s}}(\mathrm{int})$ exceeds the dynamic gap of \cref{prop:cg} by the autoregressively accumulated confounding; on our grid $\CG_{\mathrm{s}}/\CG=2.1$ (\Vz) and $2.6$ (\Vo). A predictor that answers a what-if query with the static association therefore has impact slope above 1 on the \cref{prop:cg} line.
\end{proposition}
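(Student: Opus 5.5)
The plan is to compute the population limit of the static slope directly from the structural equation for $Y_t$ in \eqref{eq:scm}; nothing about dynamics is needed beyond stationarity of the context process. Under \Vz\ the static forecaster regresses $Y_t$ on $(1,X_t)$, so as $L\to\infty$ its slope converges (ergodicity of the stable linear-Gaussian VAR, assuming $|\rho_z|,|\rho_x|,|\rho_y|<1$) to $b_{\mathrm{s}}=\Cov(Y_t,X_t)/\Var(X_t)$. I will substitute $Y_t=\rho_y Y_{t-1}+\beta X_t+\gamma Z_{t-1}+\varepsilon^y_t$ and use bilinearity of covariance. Since $\varepsilon^y_t$ is independent of $X_t$ (it enters only $Y_t$ and later values), this gives the three displayed terms immediately. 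The horizon response to $\dob(X_{L+1}\leftarrow x+\delta)$ is then $b_{\mathrm{s}}\delta$, because the fitted map is linear in $X$ and is applied unchanged to the horizon. For \Vo\ and \Vt\ I will apply the same substitution with regressors $(1,X_t,Z_{t-1})$ and use Frisch--Waugh--Lovell: the $Z_{t-1}$ term is absorbed, and the omitted $\rho_y Y_{t-1}$ contributes $\rho_y$ times the partial coefficient of $X_t$ in the regression of $Y_{t-1}$ on $(1,X_t,Z_{t-1})$. Under \Vo\ the future of $Z$ must be imputed, exactly as in the abduction-drift step of \cref{prop:cg}, which adds $\gamma g$.

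Next I will compare this with \cref{prop:cg}. In the randomized regime ($\alpha=0$), $X_t$ is independent of $Z_{t-1}$. However, $\Cov(Y_{t-1},X_t)=\rho_x\Cov(Y_{t-1},X_{t-1})=\rho_x\beta\Var(X)+\dots$ remains nonzero through the lagged self-effect. I will therefore write both regimes' covariances by solving the stationary Lyapunov equations for the joint process $(Z,X,Y)$. This expresses $\Cov(Y_{t-1},X_t)$ as a geometric sum $\sum_k \rho_y^k[\beta\,\Cov(X_{t-1-k},X_t)+\gamma\,\Cov(Z_{t-2-k},X_t)]$. Splitting that sum into its $\beta$-part and its $\gamma$-part shows how it behaves across regimes. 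The $\gamma$-part is a sum of lagged confounding terms that vanish when $\alpha=0$. The $\beta$-part also differs between regimes, but only through $\Var(X)$ and the autocovariances of $X$.

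The main obstacle is the claimed inequality $\CG_{\mathrm{s}}>\CG$, which requires showing that the autoregressive term adds confounding of the same sign rather than cancelling it. My first attempt will expand $\CG_{\mathrm{s}}-\CG$ as $\gamma[\text{contemporaneous term minus the FWL-partialled term}]+\rho_y[\text{regime difference of }\Cov(Y_{t-1},X_t)/\Var(X_t)]$. I will then show that each piece has the sign of $\alpha\gamma$ when $\rho_x,\rho_z,\rho_y\ge0$. The intuition is that partialling out $Y_{t-1}$ in \cref{prop:cg} removes exactly the accumulated confounding $\sum_k\rho_y^k\gamma\Cov(Z_{t-2-k},X_t)$, which the static regression retains. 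I do not expect a clean general inequality without such sign conditions, so I will state the result under them. The specific ratios $2.1$ and $2.6$ are not derivable symbolically; they come from evaluating the closed-form moments on the preregistered parameter grid, and I will present them as such.

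The final sentence then follows. A model answering with $b_{\mathrm{s}}$ has gap $\CG_{\mathrm{s}}=\kappa\,\CG$ with $\kappa>1$, so on the \cref{prop:cg} reference line its points lie along a line of slope $\kappa$ through the origin. The slope is exactly $\kappa$ when the ratio is constant across the grid, and in a least-squares sense otherwise.
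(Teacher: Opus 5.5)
Your derivation of the displayed formula for $b_{\mathrm{s}}$ (substitute the $Y$-equation, use bilinearity and $\varepsilon^y_t\perp X_t$) is the paper's omitted-variable argument. Treating $2.1$ and $2.6$ as grid evaluations is also what the paper does. The paper proves nothing further analytically: its ``exceeds'' and ``impact slope above 1'' claims rest on that numerical ratio. The step of yours that would fail is the planned lemma that, for nonnegative $\rho$'s, each piece of
\[
\CG_{\mathrm{s}}-\CG=\gamma\bigl[g(\mathrm{obs})-\pi_X(\mathrm{obs})+\pi_X(\mathrm{int})\bigr]+\rho_y\bigl[c(\mathrm{obs})-c(\mathrm{int})\bigr],\qquad c:=\frac{\Cov(Y_{t-1},X_t)}{\Var(X_t)},
\]
has the sign of $\alpha\gamma$. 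The first piece is negative on the grid. The stationary moments at $\alpha=\gamma=1$ (with $\rho=(0.6,0.5,0.5)$, $\beta=1$) give $g(\mathrm{obs})\approx0.43$, $\pi_X(\mathrm{obs})\approx0.51$ and $\pi_X(\mathrm{int})\approx-0.12$, so the bracket is about $-0.20$; at $\alpha=\gamma=0.5$ it is about $-0.07$. What goes wrong is the intuition that partialling out $Y_{t-1}$ removes exactly the accumulated confounding. Partialling can push the coefficient above the simple one, and the induced $\pi_X(\mathrm{int})<0$ adds to the dynamic gap. So $\CG$ exceeds the contemporaneous term $\gamma g(\mathrm{obs})$ ($0.63$ against $0.43$ at $\alpha=\gamma=1$). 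The inequality $\CG_{\mathrm{s}}>\CG$ holds there only because the $\rho_y$ piece ($\approx0.56$, giving $\CG_{\mathrm{s}}\approx0.99$) outweighs the negative one.

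Nor does the $\rho_y$ piece carry the sign of $\alpha\gamma$. Its $\gamma$-part does. Its $\beta$-part, however, is $\beta\sum_k\rho_y^k$ times the regime change in the lag-$(k+1)$ autocorrelation of $X$. That change is driven by $\alpha^2$ and $\rho_z$, so the $\beta$-part has the sign of $\beta$ and stays nonzero at $\gamma=0$, where $\CG=0$. So there is no piecewise sign argument. Proving the inequality in general would need a joint bound on the two pieces, and without one the inequality and the final sentence rest, as in the paper, on the grid evaluation. The same observation shows the ratio is cell-dependent: it diverges as $\gamma\to0$ at fixed $\alpha$. In population it is about $2.0$ at $\alpha=\gamma=0.5$ and $1.6$ at $\alpha=\gamma=1$ under \Vz; the paper's own $R^2=0.93$ for the least-squares control against the static reference points the same way. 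So the least-squares reading of the slope in your last paragraph, not the constant-$\kappa$ case, is the one you need. A minor point under \Vo: the static fit's coefficient on $Z_{t-1}$ is $\gamma$ plus $\rho_y$ times the partial coefficient of $Z_{t-1}$ in the regression of $Y_{t-1}$ on $(1,X_t,Z_{t-1})$. The imputation therefore adds that coefficient times $g$, not $\gamma g$.
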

\Cref{prop:static} turns super-association from a label into a hypothesis with a predicted value: the TSFMs answer with the long-run association that a lag-free regression would report, not with the contemporaneous coefficient and its dynamics. \Cref{sec:line} tests it.

\paragraph{Preregistration.}
\label{sec:questions}
All classification thresholds were frozen before any foundation model was run (\texttt{preregister.yaml}, tag \texttt{prereg-freeze}); three later changes---the impact-coefficient definition, an exploratory label for slopes above the preregistered range, and a bug fix in the forecasting-cost evaluation---are recorded as amendments and reported as such (\cref{app:prereg}), following \citet{nosek2018preregistration}.

% =====================================================================
\section{Experimental Protocol}
\label{sec:protocol}
\textbf{Grid.} $\alpha,\gamma\in\{0,0.25,0.5,1,1.5\}$, $\beta=1$, $\rho=(0.6,0.5,0.5)$, unit noise, $L=512$, $H=64$, $\delta=1\sigma_x$; 200 episodes per cell and generator seed, three seeds (600 per cell), for three visibilities and two regimes. Slopes and $R^2$ are computed over the 16 confounded cells ($\alpha\gamma>0$); \Vt\ is summarised as the residual gap relative to \Vo\ on matched cells, since its prediction is identically zero. Bootstrap CIs use 1{,}000 resamples \citep{efron1979bootstrap}.
\textbf{Models.} Chronos-2 and TiRex-2 run the full grid; TabPFN-TS (1.6\,s per forecast) runs a $3\times3$ sub-grid with one seed and $n=100$, and only \Vz/\Vt\ because it drops past-only covariates by construction; TimesFM-2.5 runs the $3\times3$ sub-grid with one seed, both univariate and with the XReg covariate layer. An in-context least-squares forecaster implements \cref{prop:cg} as the theory control; the median forecast is used throughout.
\textbf{Experiments.} E1 the gap on the grid; E2 null effect ($\beta=0$); E3 context length $L\in\{128,512,2048\}$, dose $\delta\in\{0.5,1,4\}\sigma$, anticipation; E4 the frozen DoTime \texttt{dot-Identifiability-v1} suite (12{,}150 episodes, eight structures, official protocol) \citep{thumm2026dotime}; E5 repairs; E6 two Causal Chambers systems \citep{gamella2025causal}; E7 matched observational distributions; E8 out-of-prior transfer; E9 linear probes; E11 a nonlinear structural model with a nonparametric reference. Compute: one 12\,GB laptop GPU. Details in \cref{app:details}.

% =====================================================================
\section{Behaviour: Four Models, One Reference Line}
\label{sec:behaviour}

\begin{figure*}[t]
\centering
\includegraphics[width=\textwidth]{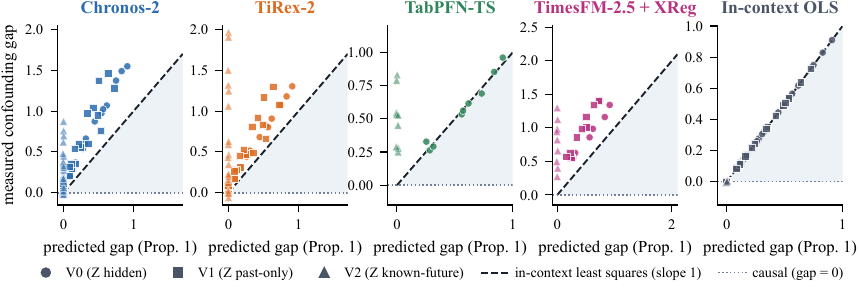}
\caption{\textbf{Measured confounding gap against the \cref{prop:cg} prediction}, one point per $(\alpha,\gamma)$ cell (slopes and $R^2$ in \cref{tab:e1}). The identity line is in-context least squares; the horizontal line is a causal predictor. All four TSFMs lie on or above the line with the confounder hidden (\Vz) or past-only (\Vo); supplying the confounder's future (\Vt, theory 0) removes only part of the gap and, for TiRex-2 and TimesFM-2.5, little or none of it. Right: the least-squares control validates the harness ($R^2=1.000$).}
\label{fig:refline}
\end{figure*}

\begin{table}[t]
\caption{\textbf{E1 confounding gap} ($L=512$; slope and $R^2$ of the measured impact-coefficient gap on the \cref{prop:cg} prediction over confounded cells, 95\% cell-bootstrap CIs in \cref{tab:ci}). $\CG_{\Vt}/\CG_{\Vo}$: residual gap when $Z$'s future is supplied (theory 0); $\bint$: randomized-context estimate of $\beta=1$. $^{\dagger}$relative to \Vz\ (TabPFN-TS drops past-only covariates).}
\label{tab:e1}
\vskip 0.08in
\centering
\footnotesize
\setlength{\tabcolsep}{2pt}
\begin{tabular}{@{}llcccc@{}}
\toprule
Model & Vis. & slope ($R^2$) & $\overline{\CG}$ & $\CG_{\Vt}/\CG_{\Vo}$ & $\bint$ \\
\midrule
\multirow{3}{*}{Chronos-2} & \Vz & 1.64 (0.99) & 0.77 & & 0.84 \\
 & \Vo & 1.78 (0.83) & 0.73 & & 0.81 \\
 & \Vt & -- & 0.40 & \textbf{0.58} & 0.97 \\
\midrule
\multirow{3}{*}{TiRex-2} & \Vz & 1.34 (1.00) & 0.62 & & 0.95 \\
 & \Vo & 1.56 (0.83) & 0.62 & & 0.97 \\
 & \Vt & -- & 0.77 & \textbf{1.07} & \textbf{0.63} \\
\midrule
\multirow{2}{*}{TabPFN-TS} & \Vz & 1.00 (0.98) & 0.57 & & 0.88 \\
 & \Vt & -- & 0.50 & \textbf{0.89}$^{\dagger}$ & 0.89 \\
\midrule
\multirow{3}{*}{TimesFM-2.5+XReg} & \Vz & 1.26 (0.90) & 0.95 & & 1.32 \\
 & \Vo & 1.62 (0.96) & 0.95 & & 1.32 \\
 & \Vt & -- & 0.78 & \textbf{0.88} & 1.32 \\
\midrule
TimesFM-2.5 (univ.) & all & -- & 0.00 & -- & 0.00 \\
\bottomrule
\end{tabular}
\end{table}

\subsection{All four models are associational; three are super-associational}
\label{sec:line}
\Cref{fig:refline} and \cref{tab:e1} give the headline result. With the confounder hidden, the measured gap of every covariate-aware model is a linear function of the \cref{prop:cg} prediction ($R^2$ 0.90--1.00), and no model lies below the line. TabPFN-TS lies exactly on it (slope 1.00): a prior-fitted tabular regressor is in-context least squares, as its design implies \citep{muller2022transformers}. Chronos-2, TiRex-2 and TimesFM-2.5 with XReg lie \emph{above} it: their impact coefficients are 1.3--1.8 times more confounded than regression (cell-bootstrap 95\% CIs exclude 1), while the cumulative coefficient has slope 0.6--1.2 (\cref{tab:e1full}). TimesFM-2.5 \emph{without} its covariate layer does not respond to the intervention at all ($S=0$, $\CG=0$): the preregistered pattern~D, and the live example of why a zero gap is not a causal answer. The gap path over the horizon explains both slopes (\cref{fig:step}): regression answers with the contemporaneous coefficient and accumulates, whereas the TSFMs answer with the \emph{long-run} association from the first step, so their gap is flat at 0.8--0.9 throughout. \Cref{prop:static} makes this quantitative. Regressing the same measured gaps on the static-regression reference instead of the dynamic one gives slopes of 1.17/1.07 (Chronos-2, \Vz/\Vo), 0.95/0.94 (TiRex-2) and 1.006 (TimesFM-2.5 with XReg, a static regression by construction) at $R^2$ 0.94--1.00, with residuals three times smaller than on the dynamic line (\cref{tab:static}); TabPFN-TS alone stays on the dynamic line. The models answer the what-if query with the static association: \emph{super-association} is a measurable departure from the Bayes-optimal reference toward \emph{more} confounding, with a predicted magnitude. It is also the opposite of the attenuation reported by \citet{won2026attenuated} on forced systems, and the two coexist: under randomized context the same models attenuate ($\bint$ 0.81--0.97, $S<1$), under confounded context they over-respond. Response magnitude alone cannot diagnose which is happening; the gap can. It falls outside the preregistered range for pattern A ($[0.6,1.4]$), so the frozen verdict is \emph{unclassified}; we report it as such and attach the exploratory label A$^+$ (amendment A2). Two further facts matter for deployment. \Vo$\approx$\Vz: the confounder's past changes the level forecast but not the effect estimate (Chronos-2 mean gap 0.73 vs 0.77; TiRex-2 0.62 vs 0.62): past-only covariates are read for prediction and ignored for adjustment. And the randomized-context estimates are close to the truth ($\bint$ 0.81--0.97, mild attenuation; \cref{prop:shrink}): the failure is specific to confounded context, which is the context users have.

\subsection{Three failure modes when the adjustment set is supplied}
\label{sec:modes}
\begin{figure}[t]
\centering
\includegraphics[width=\columnwidth]{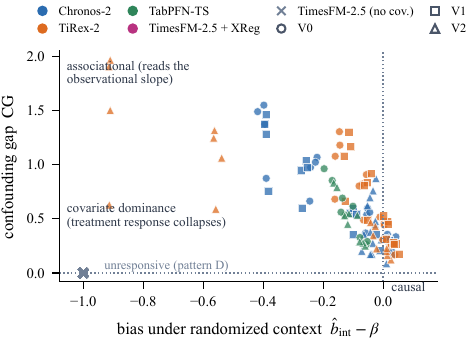}
\caption{\textbf{Causal compass.} Each point is a confounded cell. A causal predictor sits at the origin, an associational one on the vertical axis; covariate dominance pushes left. Chronos-2 (\Vt) moves toward the origin but stops; TabPFN-TS does not move; TiRex-2 (\Vt) moves \emph{left}: supplying $Z$'s future collapses its response to $X$.}
\label{fig:compass}
\end{figure}
\Vt\ is the setting in which a what-if query is answerable: the back-door set is in the context and unchanged by the intervention, and least squares removes the gap entirely. The TSFMs do not, and they fail in three distinct, reproducible ways (\cref{fig:compass}; classifier with fixed thresholds in \cref{app:classifier}).
\emph{Partial adjustment} (Chronos-2): the gap shrinks to 58\% of its \Vo\ value and $\bint$ rises to 0.97; the known-future confounder is used, but only partly.
\emph{No adjustment} (TabPFN-TS): the gap is 89\% of its \Vz\ value; the covariate is entered as a regressor but the what-if contrast is read off the marginal association.
\emph{Covariate dominance} (TiRex-2): the gap does not shrink at all (107\%), but for the opposite reason---the randomized-context estimate collapses from 0.97 to 0.63 (to 0.09 at $\gamma=1.5$, and even at $\alpha=0$ where there is no confounding): a strong known-future covariate crowds out the control, a second failure axis that the compass separates from confounding. TimesFM-2.5 with XReg joins the \emph{no adjustment} mode (88\% of the gap remains): its linear covariate layer has no autoregressive term, so entering $Z$ removes only the contemporaneous part of the confounding.

\begin{figure*}[t]
\begin{minipage}[t]{0.49\textwidth}
\centering
\includegraphics[width=\linewidth]{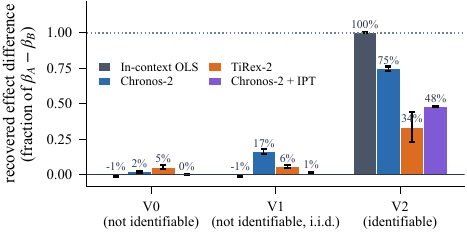}
\caption{\textbf{Matched observational distributions (E7).} Pairs of structural models with identical $(X,Y)$ law but effects $\beta_A=1$ and $\beta_B\in\{0.5,0.25,0\}$ (mean $\pm$ s.d. over pairs). Where the effect is not identifiable every model, including least squares, answers the same number: not a failure. Where it is (\Vt), least squares recovers the full difference, Chronos-2 three quarters, TiRex-2 one third: information present, not used.}
\label{fig:ident}
\end{minipage}\hfill
\begin{minipage}[t]{0.49\textwidth}
\centering
\includegraphics[width=\linewidth]{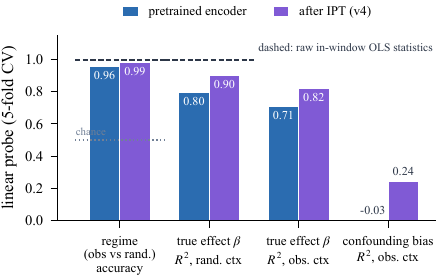}
\caption{\textbf{Linear probes on Chronos-2 encoder embeddings} ([REG] token $+$ patch mean; 600 episodes; held-out 5-fold). The pretrained encoder separates randomized from confounded contexts (96\%) and encodes the true effect ($R^2$ 0.80/0.71) but not the confounding bias ($R^2\approx0$). After post-training all four rise: the encoder begins to represent the bias it must subtract.}
\label{fig:probe}
\end{minipage}
\end{figure*}
\subsection{Effects from nothing, and not a finite-sample effect}
With $\beta=0$ there is no causal edge from $X$ to $Y$, yet under observational context every model reports one (\cref{tab:null}): 0.47 (Chronos-2), 0.48 (TiRex-2), 0.44 (TabPFN-TS) and 0.72 (TimesFM-2.5+XReg) against a least-squares omitted-variable bias of 0.42, with a 100\% false-positive rate across cells; under randomized context the same models report 0.00. Supplying $Z$'s future removes two thirds of it for Chronos-2, one third for TiRex-2, almost none for TabPFN-TS.
The gap is not a small-sample artefact of in-context estimation; if it were, it would shrink with $L$ (\cref{fig:shrink}, \cref{app:extra}). Under randomized context sensitivity rises with $L$ (Chronos-2 \Vt: 0.77$\to$0.96$\to$0.99 for $L=128,512,2048$), the ridge-shrinkage signature of \cref{prop:shrink}; under observational context it stays at 1.3--1.4 at every $L$ (at $L=2048$ the Chronos-2 gap is still 0.98/0.85/0.39 for \Vz/\Vo/\Vt), and dose behaves likewise ($\bint$ 0.72, 0.77, 0.94 for $\delta=0.5,1,4\sigma$, gap unchanged). TiRex-2 under \Vt\ is again the exception: its sensitivity \emph{falls} with $L$ (0.65$\to$0.56$\to$0.52). Anticipation before the onset is absent (median leak ratio $\le0.035$).

\subsection{The identifiability split}
\label{sec:ident}
The natural objection is that observational data are not identifying, so an associational answer is the only possible one. That is true in \Vz\ and false in \Vt, and E7 makes the distinction operational. We construct pairs of i.i.d.\ structural models whose $(X,Y)$ joint laws coincide to the second moment but whose effects differ ($\beta_A=1$ vs $\beta_B$), and ask each model for its effect on each member (\cref{fig:ident}). Under \Vz, and under \Vo\ in this design (without temporal precedence the data cannot tell $Z\to X$ from $X\to Z$, and the abduction drift is equal for both members by construction), no estimator can separate the pair: least squares recovers $-1\%$ of the difference, Chronos-2 2--17\%, TiRex-2 5--6\%; agreement is the correct answer. Under \Vt\ the effect is identifiable and least squares recovers 100\%; Chronos-2 recovers 75\%, TiRex-2 34\%. This isolates the utilisation failure from identification: the information is in the context, and the models leave most of it on the table.

\subsection{A nonlinear structural model}
\label{sec:nonlin}
\begin{figure*}[t]
\centering
\includegraphics[width=\textwidth]{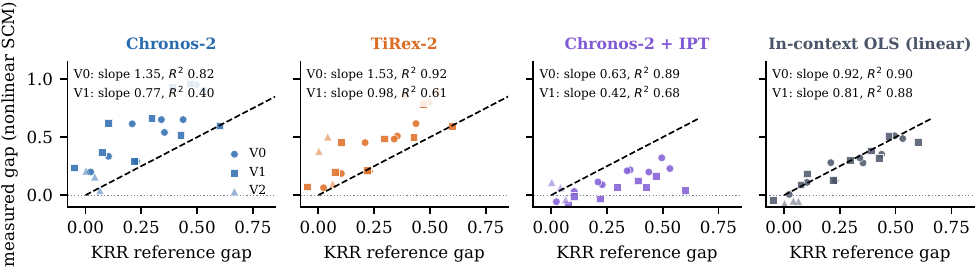}
\caption{\textbf{Nonlinear structural model (E11).} $X$ responds to $\tanh(Z)$, $Y$ to $1.5\tanh(X)$ and a convex function of $Z$; the reference is an in-context kernel ridge regression with the same three visibility rules, and its own observational-minus-randomized gap is the $x$-axis. Chronos-2 and TiRex-2 again sit above the reference under \Vz\ (slopes 1.35 and 1.53, $R^2$ 0.82 and 0.92); post-training on the \emph{linear} family transfers (gap 0.57$\to$0.15 under \Vz, 0.58$\to$0.02 under \Vo); the linear least-squares control is itself close to the nonparametric reference (slope 0.92, $R^2$ 0.90).}
\label{fig:nonlin}
\end{figure*}
The closed form of \cref{prop:cg} is linear-Gaussian; the behaviour is not. E11 replaces the structural equations by saturating and convex nonlinearities and the reference by an in-context kernel ridge regression (\cref{app:details}), whose own gap under each regime is the $x$-axis of \cref{fig:nonlin}. The picture is unchanged: Chronos-2 and TiRex-2 lie above the nonparametric reference under \Vz\ (slopes 1.35 and 1.53, $R^2$ 0.82 and 0.92; mean gaps 0.57 and 0.47 against 0.30 for the reference), TiRex-2 keeps its covariate-dominance signature under \Vt\ (gap 0.55, $\bint$ 0.36), and Chronos-2 adjusts more than on the linear model when $Z$'s future is supplied (16\% of its \Vo\ gap remains). The linear least-squares control stays close to the nonparametric one (slope 0.92, $R^2$ 0.90), which is why the linear reference line is informative beyond the linear family.

\subsection{An external nonlinear benchmark}
On the frozen DoTime identifiability suite (12{,}150 episodes, nonlinear generators, eight structures) Chronos-2 and TiRex-2 attain the best level accuracy of any non-oracle method (NMSE 0.44 and 0.49), yet their effect-direction accuracy (0.68, 0.67) is \emph{below} that of a naive unadjusted regression (0.77), and on the back-door structure they score 0.68 against 0.81 for the adjusting estimator (\cref{tab:e4}). Strong forecasting and absent adjustment coexist outside our generator family.

% =====================================================================
\section{Representation: The Information Is There}
\label{sec:probe}
\begin{figure*}[t]
\centering
\includegraphics[width=\textwidth]{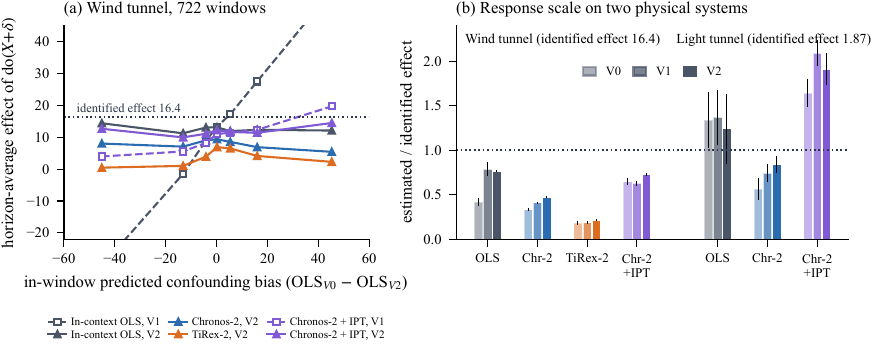}
\caption{\textbf{Causal Chambers.} (a)~Wind tunnel: horizon-average effect of $\dob(\text{load\_in}+\delta)$ on downwind pressure versus the in-window confounding bias predicted from context (septile bins). An associational predictor has slope 1 (least squares, \Vo), an adjusting one slope 0 (least squares, \Vt). Pretrained models have slope $\approx0$ at every visibility \emph{and} a response one third to one half of the identified effect: they ignore the covariate rather than adjust for it. (b)~Response scale on both systems: on the light tunnel the post-trained model overshoots by $1.6$--$2.1\times$.}
\label{fig:real}
\end{figure*}
Why do three thousand synthetic series suffice to change a what-if answer (\cref{sec:repair})? Because the encoder already carries what adjustment needs; only the readout is wrong. We establish this with linear probes \citep{alain2017understanding} on frozen Chronos-2 embeddings of $(y,x,z)$ contexts from both regimes with $\beta\sim U[-1.5,1.5]$ (\cref{fig:probe}). The regime---whether $X$ was randomized---is linearly decodable at 96\% (chance 50\%); the true effect $\beta$ at $R^2=0.80$ under randomized and $0.71$ under observational context; the \cref{prop:cg} confounding bias is not ($R^2\approx0$). Raw in-window least-squares statistics reach 99.8\% and $R^2=0.997$, so the encoder \emph{preserves} this information rather than computing the estimator. The forecasts of \cref{sec:behaviour} use none of it: the model represents that the context is confounded and what the effect is, and answers the association anyway. After post-training, regime accuracy rises to 99\%, effect $R^2$ to 0.90/0.82, and the bias becomes decodable ($R^2=0.24$): the repair changes the representation, not only the output head. The same probes on TiRex-2's post-stack hidden states give the same picture across architectures: regime 94\%, effect $R^2$ 0.90/0.86, bias $R^2$ 0.15 rising to 0.24 after one round of post-training (\cref{tab:probe}).

% =====================================================================
\section{Real Physical Systems}
\label{sec:real}
\begin{table*}[t]
\caption{\textbf{Repairs} (Chronos-2 unless stated; $3\times3$ confounded grid; KS = KernelSynth). Gap reduction relative to the pretrained model; $\bint$ under \Vo; spurious effect at $\beta=0$ under observational \Vo\ context (pretrained 0.47; least squares 0.42); change in M4-hourly MASE (mean / median over 200 series); DoTime effect-direction accuracy (pretrained 0.684). Preregistered success ($\checkmark$): $\ge50\%$ reduction on both visibilities \emph{and} MASE change $\le5\%$ on both the mean and the median.}
\label{tab:repair}
\vskip 0.08in
\centering
\scriptsize
\setlength{\tabcolsep}{3pt}
\begin{tabular}{@{}llcccccc@{}}
\toprule
& Repair & $\CG$ red.\ \Vo & $\CG$ red.\ \Vt & $\bint$ (\Vo) & spurious & MASE mean / med. & DoTime \\
\midrule
\multirow{4}{*}{\rotatebox{90}{\scriptsize inference}}
& context dither ($0.5\sigma\times4$) & $-6\%$ & $-34\%$ & 0.78 & -- & 0 & -- \\
& in-context randomization, $K{=}2$ & $-19\%$ & $+5\%$ & 0.90 & -- & 0 & -- \\
& in-context randomization, $K{=}8$ & $-22\%$ & $+20\%$ & 0.88 & -- & 0 & -- \\
& do-flag without training & $-4\%$ & $+13\%$ & 0.74 & -- & 0 & -- \\
\midrule
\multirow{9}{*}{\rotatebox{90}{\scriptsize post-training}}
& v2: \Vo\ contexts, $\beta$ incl.\ 0 & $-85\%$ & $-44\%$ & 0.96 & 0.28 & $+71$ / $+21\%$ & 0.693 \\
& v3: \Vt\ contexts & $-70\%$ & $\mathbf{-97\%}$ & 0.64 & 0.21 & $+46$ / $+13\%$ & 0.697 \\
& \textbf{v4: mixed \Vo/\Vt} (3 seeds) & $\mathbf{-78\%}$ ($76\pm2$) & $\mathbf{-82\%}$ ($88\pm5$) & 0.86 & 0.25 & $+47$ / $+10\%$ & \textbf{0.709} \\
& v5: v4 $+$ signed $\gamma$ & $-85\%$ & $-93\%$ & 1.11 & 0.35 & $+50$ / $+11\%$ & -- \\
& v6: v4, full, lr $10^{-6}$ & $-36\%$ & $-39\%$ & 0.93 & -- & $+22$ / $+11\%$ & -- \\
& v7: v4, LoRA, lr $10^{-5}$ & $-14\%$ & $-7\%$ & 0.83 & -- & $\mathbf{+1}$ / $\mathbf{+3\%}$ & -- \\
& v8: v4, LoRA, lr $10^{-4}$ & $-57\%$ & $-62\%$ & 0.96 & -- & $+17$ / $+12\%$ & -- \\
& v9: v4 $+$ real-series replay & $-67\%$ & $-70\%$ & 1.01 & -- & $+32$ / $+23\%$ & -- \\
& v10: do-flag counterfactual supervision & $-55\%$ & $-74\%$ & 1.13 & 0.42 & $+31$ / $+10\%$ & -- \\
\midrule
\multirow{9}{*}{\rotatebox{90}{\scriptsize replay / regularisation}}
& \textbf{v11: v4 $+$ KS replay 3{,}000} & $\mathbf{-83\%}$ & $\mathbf{-82\%}$ & 1.05 & 0.32 & $\mathbf{+10}$ / $+11\%$ & -- \\
& v12: v4, 50\% observational-regime series & $-62\%$ & $-54\%$ & 0.89 & 0.30 & $+49$ / $+11\%$ & -- \\
& v13: LoRA $10^{-4}$ $+$ KS replay 3{,}000 & $-46\%$ & $-53\%$ & 0.93 & 0.43 & $\mathbf{+2}$ / $\mathbf{+3\%}$ & -- \\
& v14: v4 $+$ L2-SP ($\lambda=10^{-2}$) & $-82\%$ & $-84\%$ & 0.97 & -- & $+50$ / $+10\%$ & -- \\
& v15: LoRA $2{\times}10^{-4}$ $+$ KS replay 3{,}000 & $-71\%$ & $-61\%$ & 1.03 & -- & $+4$ / $+6\%$ & -- \\
& v16: v13, 3{,}000 steps & $-54\%$ & $-56\%$ & 1.01 & -- & $+3$ / $+7\%$ & -- \\
& v17: v13, KS replay 1{,}500 & $-43\%$ & $-49\%$ & 0.89 & -- & $\mathbf{0}$ / $\mathbf{+2\%}$ & -- \\
& \textbf{v18: LoRA $1.5{\times}10^{-4}$ $+$ KS replay 3{,}000} $\checkmark$ & $\mathbf{-59\%}$ & $\mathbf{-58\%}$ & 0.98 & -- & $\mathbf{+3}$ / $\mathbf{+4\%}$ & -- \\
& \textbf{v19: LoRA $2{\times}10^{-4}$ $+$ KS replay 4{,}500} $\checkmark$ (3 seeds) & $\mathbf{-54\%}$ ($52\pm1$) & $\mathbf{-63\%}$ ($61\pm6$) & 1.01 & 0.41 & $\mathbf{+1}$ / $\mathbf{+2\%}$ ($1.4{\pm}0.5$ / $2.9{\pm}1.3$) & \textbf{0.710} \\
& v20: v19 $+$ wide scale randomization & $-29\%$ & $-55\%$ & 0.95 & -- & $+1$ / $+2\%$ & -- \\
& v22: v19 $+$ $X,Y$ scale randomization $\checkmark$ & $-50\%$ & $-56\%$ & 0.94 & -- & $+3$ / $+3\%$ & -- \\
& v23: v19 $+$ scale rand.\ on half the series & $-50\%$ & $-55\%$ & 0.94 & -- & $+1$ / $+2\%$ & -- \\
& v21: v19 $+$ signed $\gamma$ & $-36\%$ & $-26\%$ & 0.96 & 0.44 & $+1$ / $+2\%$ & -- \\
\midrule
& TiRex-2 $+$ v4 recipe (one round) & $-29\%$ & $-42\%$ & 1.04 & 0.53 & $+6$ / $+6\%$ & -- \\
& TiRex-2 $+$ v4 recipe $+$ KS replay 3{,}000 & $-28\%$ & $-32\%$ & 1.03 & -- & $+2$ / $+4\%$ & -- \\
\bottomrule
\end{tabular}
\end{table*}
Causal Chambers \citep{gamella2025causal} are physical systems whose actuators are set by random walks, so the pooled data are experimental and the effect of an actuator on a sensor is identified by pooled regression; within a 576-step window, however, two random walks are often spuriously correlated, so windows span a range of in-window confounding exactly as our regimes do. In the \emph{wind tunnel} the intake load is the control $X$, the exhaust load the confounder $Z$ and the downwind relative pressure $Y$ (identified effect 16.4; 722 windows); in the \emph{light tunnel} $X$ is the red LED current, $Z$ the green, $Y$ a colour sensor (identified effect 1.87; 80 windows).

Three findings (\cref{fig:real}, \cref{tab:real}). The least-squares reference behaves as the theory says: slope 1.00 on the predicted bias under \Vo, 0.00 under \Vt\ (0.91 and $-0.02$ when the roles of the two loads are swapped, identified effect $-20.1$). The pretrained models exhibit a failure the synthetic grid cannot reveal: their response is 0.2--0.5$\times$ the identified effect on the wind tunnel (0.4--0.8$\times$ with the roles swapped) and 0.6--0.8$\times$ on the light tunnel, and it does not vary with the in-window confounding at any visibility (slopes $-0.03$ to $0.17$). This is not adjustment but indifference to the covariate---the preregistered pattern~D on real data. Post-training changes the response scale in both directions: on the wind tunnel it rises to 0.65--0.73$\times$ with the expected visibility structure (slope 0.41 under \Vz, where least squares has 0.43; 0.22 under \Vo; 0.03 under \Vt\ with RMSE 8.3 against 7.9 for least squares), while on the light tunnel it overshoots to 1.6--2.1$\times$. The repaired model responds to interventions; its scale is system dependent, which is the open problem we leave.

% =====================================================================
\section{Repair and Its Cost}
\label{sec:repair}

\begin{figure*}[t]
\centering
\includegraphics[width=\textwidth]{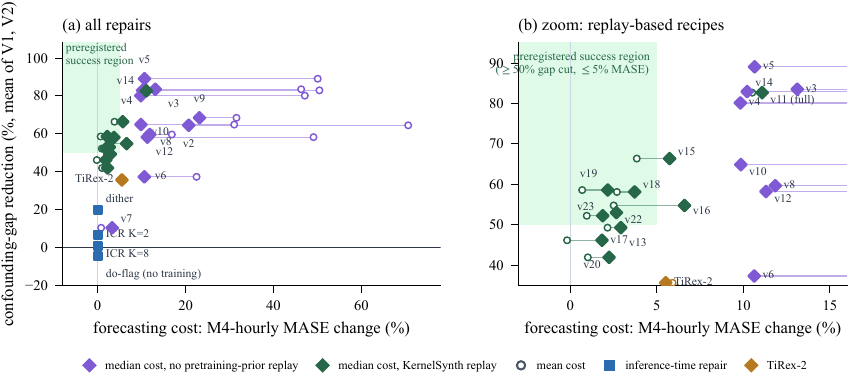}
\caption{\textbf{The repair trade-off.} Gap reduction against forecasting cost for every variant of \cref{tab:repair} (filled diamonds: median cost; open circles: mean; ICR = in-context randomization). (a)~Variants without pretraining-prior replay form a curve on which reductions above 50\% cost at least 10\% median MASE. (b)~KernelSynth replay (green) removes the mean-cost tail at full-parameter learning rates (v11) and, with low-rank adaptation, moves the curve into the preregistered region (v18, v19).}
\label{fig:tradeoff}
\end{figure*}

\Cref{prop:bayes} leaves two exits: supply the adjustment set and make the model use it, or change the training distribution. We take both (\cref{tab:repair}, \cref{fig:tradeoff}).

\paragraph{Inference-time repairs do not make in-context learning causal.}
Context dither \citep{won2026attenuated} reduces the \Vt\ gap by a third and the \Vo\ gap not at all. \emph{In-context randomization}---appending $K$ demonstration series from the randomized regime to the group, a zero-training analogue of few-shot prompting---raises $\bint$ from 0.75 to 0.90 but leaves $\bobs$ unchanged: it de-shrinks, it does not de-confound; the model does not infer from randomized demonstrations that it should adjust its own estimate. A ``do'' flag without training has no effect. None reaches the preregistered 50\%: in-context learning will not do causal identification on its own.

\paragraph{Interventional post-training re-maps the latent information.}
\emph{Interventional post-training} (\IPT) fine-tunes the forecaster for 2{,}000 steps on 3{,}000 series of length $L{+}H$ from the randomized regime of \eqref{eq:scm} with $\alpha,\gamma\sim U[0,1.5]$, $X$ (and optionally $Z$) as covariates and the ordinary forecasting loss (5--8 minutes on a laptop GPU). Two ablations fix the recipe. \emph{Training support determines the prior}: with $\beta\sim U[0.5,1.5]$ (v1) the model learns that $X$ always helps and reports a spurious $0.24$ at $\beta=0$ even under randomized context; including zero and negative effects (v2) removes it. \emph{Training visibility determines what is repaired}: \Vo-only training repairs \Vo\ ($-85\%$) and leaves \Vt\ half-done; \Vt-only training repairs \Vt\ ($-97\%$) but makes the model dependent on $Z$'s future; mixing both (v4) repairs both ($-78\%$, $-82\%$; $76\pm2$, $88\pm5$ over three seeds).
The repair is not memorisation of the training range. With confounding strengths outside the prior ($\alpha,\gamma\in\{2,2.5\}$) the reduction is unchanged ($-78\%$, $-79\%$; \cref{tab:ood}); at a $4\sigma$ intervention it is $-64\%/-54\%$; on the nonlinear structural model of \cref{sec:nonlin}, which the training data never contained, the gaps fall from 0.57/0.58 to 0.15/0.02; on DoTime effect-direction accuracy rises from 0.684 to 0.709 (back-door 0.68$\to$0.72) at unchanged NMSE; and the encoder itself changes (\cref{sec:probe}). The boundary is sign: at $\beta=-1$ with positive confounding the reduction is $-15\%/-44\%$, and training with signed $\gamma$ (v5) does not fix it while over-correcting elsewhere ($\bobs$ 1.26, spurious effect back to 0.12); when the direct and back-door paths have opposite signs and nearly cancel, the learned shortcut does not separate them as least squares does. A second boundary is the i.i.d.\ design of \cref{sec:ident}, where the full-parameter model recovers 48\% of the identifiable difference against 75\% before.

\paragraph{The cost is a replay-distribution effect, and it can be removed.}
The preregistered criterion requires a $\ge50\%$ gap reduction at $\le5\%$ forecasting degradation on M4-hourly \citep{makridakis2020m4}. The cost of v4 is $+47\%$ in mean and $+10\%$ in median MASE over 200 series (an evaluation bug initially hid it; amendment A3). \Cref{fig:tradeoff} places every variant on the resulting curve. Lower learning rates (v6), low-rank adaptation (v7, v8) and an L2-SP penalty toward the pretrained weights (v14) move along the curve, not off it; replaying \emph{real} univariate series (v9) makes forecasting worse ($+32\%/+23\%$), reproducing the finding of \citet{won2026attenuated} that M4 replay did not prevent forgetting, and rehearsing observational-regime series (v12) does not help. What moves the curve is the replay \emph{source}. Mixing in 3{,}000 series drawn from a KernelSynth GP prior---the family Chronos was pretrained on \citep{ansari2024chronos}---keeps the full-parameter gap reduction (v11: $-83\%/-82\%$) while cutting the mean cost from $+47\%$ to $+10\%$: the catastrophic tail of the v4 cost was forgetting of the pretraining distribution, not of forecasting. Combining that replay with low-rank adaptation brings the cost inside the budget and, with learning rate and replay size tuned, the gap reduction above the bar: v18 (lr $1.5\times10^{-4}$, 3{,}000 replay series) reduces the gap by $59\%/58\%$ at $+2.7\%/+3.7\%$ MASE, and v19 (lr $2\times10^{-4}$, 4{,}500 replay series) by $54\%/63\%$ at $+0.7\%/+2.2\%$. Both satisfy the criterion on the mean and on the median reading, the neighbouring settings (v13, v15--v17) bracket them on each bar, so the region is reached by a family of recipes, and over three seeds v19 reduces the gap by $52\pm1\%$ (\Vo) and $61\pm6\%$ (\Vt) at $+1.4\pm0.5\%$ mean and $+2.9\pm1.3\%$ median MASE, every seed inside the region. The cost of interventional post-training was never the interventional supervision; it was the absence of the pretraining distribution during training.

\paragraph{What the cheap recipe buys, and what it does not.}
The low-rank and the full-parameter recipes repair different things, and the distinction is the practical content of \cref{tab:repair}. v19 moves Chronos-2 from super-associational to regression-level: under \Vo\ its observational estimate falls from 1.72 to 1.46 (least squares: 1.42) and its randomized estimate rises from 0.75 to 1.01, so the front-loaded excess and the shrinkage are gone, while the omitted-variable bias that regression itself carries remains (spurious effect at $\beta=0$: 0.41 against 0.47 before and 0.25 for v4). The full-parameter recipes (v4, v11) go below the regression line ($\bobs$ 1.07--1.22) at a higher cost. Where v19 transfers, it transfers like v4: $-62\%/-61\%$ outside the confounding prior, $-54\%/-79\%$ at $L=2048$, $-45\%/-53\%/-73\%$ on the nonlinear model, DoTime effect-direction accuracy 0.710 (back-door 0.73) with the best NMSE of any model (0.433), and 61\% of the identifiable difference in the i.i.d.\ design of \cref{sec:ident} (v4: 48\%). The same holds for negative effects once they are read in estimates rather than percentages. At $\beta=-1$ the pretrained model's gap is small (0.38) only because its shrinkage ($\bint=-0.80$) offsets part of its confounding ($\bobs=-0.42$); v19 moves both to where it moves them at $\beta=+1$---$\bobs$ to the regression line ($-0.55$; least squares $-0.58$) and $\bint$ to the truth ($-0.95$)---so the apparent ``no reduction'' ($-6\%$) is an artefact of the gap-reduction metric at a cell where the baseline gap was itself attenuated. Training with signed confounding (v21) is unnecessary and costly ($-36\%/-26\%$ in distribution). The remaining boundaries are large interventions ($-34\%/-17\%$ at $4\sigma$) and the real systems, where v19 under-responds on both wind-tunnel configurations (0.42--0.56$\times$ and 0.69--0.85$\times$, slope $\approx0$) and over-responds on the light tunnel (1.15--1.52$\times$), by less than v4 in both directions; randomizing the noise scales and dynamics of the training family trades calibration against in-distribution repair: wide randomization (v20) brings the light-tunnel response to 1.16--1.19$\times$ but halves the \Vo\ gap reduction ($-29\%$), while randomizing only the $X$ and $Y$ scales (v22) keeps the reduction at the bar ($-50\%/-56\%$, $+2.7\%$ MASE) and moves the light-tunnel response to 1.06--1.41$\times$; neither changes the wind-tunnel under-response, which is a property of the pretrained model's response scale that no variant we trained repairs. Explicit counterfactual supervision through a ``do'' flag on observational contexts (v10) reduces the gap for the wrong reason: $\bobs$ barely moves (1.72$\to$1.57), the gap closes because $\bint$ overshoots to 1.13, and the spurious effect at $\beta=0$ is unchanged (0.42); regime training beats an interface. On TiRex-2 one round of the v4 recipe halves the covariate-dominance collapse ($\bint$ under \Vt: 0.50$\to$0.74) and reduces the gaps by 29--42\% at $+6\%$ MASE; adding KernelSynth replay cuts that cost to $+2\%/+4\%$ at a similar reduction, so the replay diagnosis transfers across architectures even where one round of full-parameter training does not reach the bar.

% =====================================================================
\section{Discussion}
\label{sec:discussion}
Covariate-aware TSFMs are, as \cref{prop:bayes} predicts and four models confirm, associational what-if engines: their immediate response is \emph{more} confounded than regression, on linear and nonlinear structural models alike; they carry the regime and the effect in their representations and do not use them; they fail to adjust in three characteristic ways when adjustment is possible, or do not respond at all; on real sensors they under-respond rather than adjust. A forecast difference from a pretrained covariate-aware TSFM is an estimate of the observational slope, front-loaded, and should be used as an effect only on experimental context.

\textbf{Limitations.}
The propositions are proved for linear-Gaussian models; nonlinearity is covered empirically (\cref{sec:nonlin}, DoTime, two physical systems). Post-training transfers outside its prior, to the nonlinear model, to DoTime and to three physical configurations, but one round on TiRex-2 is the only cross-architecture evidence. The low-rank recipe removes super-association and shrinkage but not the regression-level omitted-variable bias; calibrating the repaired response scale across physical systems trades against in-distribution repair in our hands, and forecasting cost is measured on one benchmark.

\textbf{Conclusion.}
We give the field a measurement and a recipe: the confounding gap with its closed-form reference and compass, on which any covariate-aware forecaster can be placed without ground truth, and interventional post-training with pretraining-prior replay, which moves a model toward the origin at a forecasting cost within the preregistered budget.

\section*{Impact Statement}
What-if forecasts from foundation models are increasingly used to plan prices, doses, loads and policies. This paper shows that four widely used models answer such queries associationally, and provides a test practitioners can run without ground truth. The main risk we see is the opposite one: that a post-trained model, now responsive to interventions but uncalibrated, is trusted more than its predecessor. The recipe removes most of the confounding gap on the synthetic family, not all of it, and its response scale on real systems is not yet calibrated. We report the forecasting cost and the light-tunnel overshoot for that reason, and recommend that any deployed what-if interface carry the confounding gap measured on the deployment data.

\bibliography{refs}

@book{pearl2009causality,
  title={Causality: Models, Reasoning, and Inference},
  author={Pearl, Judea},
  edition={2},
  year={2009},
  publisher={Cambridge University Press}
}

@incollection{bareinboim2022hierarchy,
  title={On {P}earl's hierarchy and the foundations of causal inference},
  author={Bareinboim, Elias and Correa, Juan D. and Ibeling, Duligur and Icard, Thomas},
  booktitle={Probabilistic and Causal Inference: The Works of Judea Pearl},
  pages={507--556},
  year={2022},
  publisher={ACM}
}

@book{angrist2009mostly,
  title={Mostly Harmless Econometrics: An Empiricist's Companion},
  author={Angrist, Joshua D. and Pischke, J{\"o}rn-Steffen},
  year={2009},
  publisher={Princeton University Press}
}

@article{lovell1963seasonal,
  title={Seasonal adjustment of economic time series and multiple regression analysis},
  author={Lovell, Michael C.},
  journal={Journal of the American Statistical Association},
  volume={58},
  number={304},
  pages={993--1010},
  year={1963}
}

@article{thumm2026dotime,
  title={{DoTime}: A synthetic benchmark generator for interventional and counterfactual time series},
  author={Thumm, Dennis and Anthony, Billy Tim and Chen, Ying},
  journal={arXiv preprint arXiv:2607.27263},
  year={2026}
}

@article{thumm2026priors,
  title={Interventional time series priors for causal foundation models},
  author={Thumm, Dennis and Chen, Ying},
  journal={arXiv preprint arXiv:2603.11090},
  year={2026},
  note={ICLR 2026 Workshop on Time Series in the Age of Large Models}
}

@article{gamella2025causal,
  title={Causal chambers as a real-world physical testbed for {AI} methodology},
  author={Gamella, Juan L. and Peters, Jonas and B{\"u}hlmann, Peter},
  journal={Nature Machine Intelligence},
  volume={7},
  pages={107--118},
  year={2025}
}

@article{ansari2025chronos2,
  title={{Chronos-2}: From univariate to universal forecasting},
  author={Ansari, Abdul Fatir and Shchur, Oleksandr and K{\"u}ken, Jaris and Auer, Andreas and Han, Boran and Mercado, Pedro and Rangapuram, Syama Sundar and Shen, Huibin and Stella, Lorenzo and Zhang, Xiyuan and Goswami, Mononito and Kapoor, Shubham and Maddix, Danielle C. and Guerron, Pablo and Hu, Tony and Yin, Junming and Erickson, Nick and Desai, Prateek Mutalik and Wang, Hao and Rangwala, Huzefa and Karypis, George and Wang, Yuyang and Bohlke-Schneider, Michael},
  journal={arXiv preprint arXiv:2510.15821},
  year={2025}
}

@article{ansari2024chronos,
  title={Chronos: Learning the language of time series},
  author={Ansari, Abdul Fatir and Stella, Lorenzo and Turkmen, Caner and Zhang, Xiyuan and Mercado, Pedro and Shen, Huibin and Shchur, Oleksandr and Rangapuram, Syama Sundar and Arango, Sebastian Pineda and Kapoor, Shubham and others},
  journal={Transactions on Machine Learning Research},
  year={2024}
}

@inproceedings{das2024decoder,
  title={A decoder-only foundation model for time-series forecasting},
  author={Das, Abhimanyu and Kong, Weihao and Sen, Rajat and Zhou, Yichen},
  booktitle={International Conference on Machine Learning},
  year={2024}
}

@article{hoo2025tables,
  title={From tables to time: Extending {TabPFN-v2} to time series forecasting},
  author={Hoo, Shi Bin and M{\"u}ller, Samuel and Salinas, David and Hutter, Frank},
  journal={arXiv preprint arXiv:2501.02945},
  year={2025}
}

@inproceedings{muller2022transformers,
  title={Transformers can do {B}ayesian inference},
  author={M{\"u}ller, Samuel and Hollmann, Noah and Pineda Arango, Sebastian and Grabocka, Josif and Hutter, Frank},
  booktitle={International Conference on Learning Representations},
  year={2022}
}

@article{tirex2_2026,
  title={{TiRex-2}: Generalizing {TiRex} to multivariate data and streaming},
  author={Podest, Patrick and Pichler, Marco and B{\"u}rger, Elias and Z{\'o}lyomi, Levente and Voggenberger, Bernhard and Berghammer, Wilhelm and Klotz, Daniel and B{\"o}ck, Sebastian and Klambauer, G{\"u}nter and Hochreiter, Sepp},
  journal={arXiv preprint arXiv:2607.01204},
  year={2026}
}

@inproceedings{beck2024xlstm,
  title={{xLSTM}: Extended long short-term memory},
  author={Beck, Maximilian and P{\"o}ppel, Korbinian and Spanring, Markus and Auer, Andreas and Prudnikova, Oleksandra and Kopp, Michael and Klambauer, G{\"u}nter and Brandstetter, Johannes and Hochreiter, Sepp},
  booktitle={Advances in Neural Information Processing Systems},
  year={2024}
}

@article{leakage2025rethinking,
  title={Rethinking evaluation in the era of time series foundation models: (Un)known information leakage challenges},
  author={Meyer, Marcel and Kaltenpoth, Sascha and Zalipski, Kevin and M{\"u}ller, Oliver},
  journal={arXiv preprint arXiv:2510.13654},
  year={2025}
}

@article{won2026attenuated,
  title={Attenuated in-context identification in time-series foundation models: Diagnosis under counterfactual inputs and repair by synthetic forced-system fine-tuning},
  author={Won, Hong-In},
  journal={arXiv preprint arXiv:2610.08118},
  year={2026}
}

@article{jander2026causal,
  title={Causal analysis for time series foundation models},
  author={Jander, Mathis and van Heeswijk, Wouter and Mes, Martijn},
  journal={arXiv preprint arXiv:2608.24303},
  year={2026}
}

@inproceedings{akyurek2023what,
  title={What learning algorithm is in-context learning? {I}nvestigations with linear models},
  author={Aky{\"u}rek, Ekin and Schuurmans, Dale and Andreas, Jacob and Ma, Tengyu and Zhou, Denny},
  booktitle={International Conference on Learning Representations},
  year={2023}
}

@inproceedings{vonoswald2023transformers,
  title={Transformers learn in-context by gradient descent},
  author={von Oswald, Johannes and Niklasson, Eyvind and Randazzo, Ettore and Sacramento, Jo{\~a}o and Mordvintsev, Alexander and Zhmoginov, Andrey and Vladymyrov, Max},
  booktitle={International Conference on Machine Learning},
  year={2023}
}

@inproceedings{garg2022what,
  title={What can transformers learn in-context? {A} case study of simple function classes},
  author={Garg, Shivam and Tsipras, Dimitris and Liang, Percy and Valiant, Gregory},
  booktitle={Advances in Neural Information Processing Systems},
  year={2022}
}

@article{hoerl1970ridge,
  title={Ridge regression: Biased estimation for nonorthogonal problems},
  author={Hoerl, Arthur E. and Kennard, Robert W.},
  journal={Technometrics},
  volume={12},
  number={1},
  pages={55--67},
  year={1970}
}

@article{illusion2026intervention,
  title={The illusion of intervention: Your {LLM}-simulated experiment is an observational study},
  author={Lin, Victoria and Yun, Taedong and Matari{\'c}, Maja and Canny, John and Gretton, Arthur and D'Amour, Alexander},
  journal={arXiv preprint arXiv:2605.20767},
  year={2026}
}

@article{causalarena2026,
  title={{CausalArena}: Benchmarking causal discovery in the foundation model era},
  author={Li, Zi-Rong and Liu, Si-Yang and Wang, Tian-Zuo and Ye, Han-Jia},
  journal={arXiv preprint arXiv:2609.11897},
  year={2026}
}

@article{kaddour2022causal,
  title={Causal machine learning: A survey and open problems},
  author={Kaddour, Jean and Lynch, Aengus and Liu, Qi and Kusner, Matt J. and Silva, Ricardo},
  journal={arXiv preprint arXiv:2206.15475},
  year={2022}
}

@article{nosek2018preregistration,
  title={The preregistration revolution},
  author={Nosek, Brian A. and Ebersole, Charles R. and DeHaven, Alexander C. and Mellor, David T.},
  journal={Proceedings of the National Academy of Sciences},
  volume={115},
  number={11},
  pages={2600--2606},
  year={2018}
}

@article{efron1979bootstrap,
  title={Bootstrap methods: Another look at the jackknife},
  author={Efron, Bradley},
  journal={The Annals of Statistics},
  volume={7},
  number={1},
  pages={1--26},
  year={1979}
}

@article{epidemic2026benchmark,
  title={Benchmarking counterfactual prediction in epidemic time series with time-varying interventions},
  author={Mu, Wenhao and Yan, Facundo and Mumssen, Anik and Eisenberg, Marisa and Rodr{\'i}guez, Alexander},
  journal={Proceedings of the 32nd ACM SIGKDD Conference on Knowledge Discovery and Data Mining},
  year={2026},
  note={arXiv:2606.05692}
}

@inproceedings{hu2022lora,
  title={{LoRA}: Low-rank adaptation of large language models},
  author={Hu, Edward J. and Shen, Yelong and Wallis, Phillip and Allen-Zhu, Zeyuan and Li, Yuanzhi and Wang, Shean and Wang, Lu and Chen, Weizhu},
  booktitle={International Conference on Learning Representations},
  year={2022}
}

@article{kirkpatrick2017overcoming,
  title={Overcoming catastrophic forgetting in neural networks},
  author={Kirkpatrick, James and Pascanu, Razvan and Rabinowitz, Neil and Veness, Joel and Desjardins, Guillaume and Rusu, Andrei A. and Milan, Kieran and Quan, John and Ramalho, Tiago and Grabska-Barwinska, Agnieszka and others},
  journal={Proceedings of the National Academy of Sciences},
  volume={114},
  number={13},
  pages={3521--3526},
  year={2017}
}

@inproceedings{rolnick2019experience,
  title={Experience replay for continual learning},
  author={Rolnick, David and Ahuja, Arun and Schwarz, Jonathan and Lillicrap, Timothy and Wayne, Gregory},
  booktitle={Advances in Neural Information Processing Systems},
  year={2019}
}

@article{makridakis2020m4,
  title={The {M4} competition: 100,000 time series and 61 forecasting methods},
  author={Makridakis, Spyros and Spiliotis, Evangelos and Assimakopoulos, Vassilios},
  journal={International Journal of Forecasting},
  volume={36},
  number={1},
  pages={54--74},
  year={2020}
}

@article{hyndman2006another,
  title={Another look at measures of forecast accuracy},
  author={Hyndman, Rob J. and Koehler, Anne B.},
  journal={International Journal of Forecasting},
  volume={22},
  number={4},
  pages={679--688},
  year={2006}
}

@inproceedings{alain2017understanding,
  title={Understanding intermediate layers using linear classifier probes},
  author={Alain, Guillaume and Bengio, Yoshua},
  booktitle={International Conference on Learning Representations, Workshop Track},
  year={2017}
}

@article{scholkopf2021toward,
  title={Toward causal representation learning},
  author={Sch{\"o}lkopf, Bernhard and Locatello, Francesco and Bauer, Stefan and Ke, Nan Rosemary and Kalchbrenner, Nal and Goyal, Anirudh and Bengio, Yoshua},
  journal={Proceedings of the IEEE},
  volume={109},
  number={5},
  pages={612--634},
  year={2021}
}

@article{wasserstein2016asa,
  title={The {ASA} statement on p-values: Context, process, and purpose},
  author={Wasserstein, Ronald L. and Lazar, Nicole A.},
  journal={The American Statistician},
  volume={70},
  number={2},
  pages={129--133},
  year={2016}
}
\bibliographystyle{icml2026}

% =====================================================================
\newpage
\appendix
\onecolumn
\raggedbottom

\section{Proofs}
\label{app:proofs}

\subsection{Proof of \cref{prop:bayes}}
Bayes optimality under squared loss for the observational distribution $P$ gives $f(c,x)=\E_P[Y_{\mathrm{fut}}\mid c,X_{\mathrm{fut}}=x]$. Write the contrast as
\[
f(c,x+\delta)-f(c,x)=\big[\E_P[Y\mid c,\dob(x+\delta)]-\E_P[Y\mid c,\dob(x)]\big]+\mathcal{B}(c,x,\delta),
\]
\[
\mathcal{B}(c,x,\delta):=\big(\E_P[Y\mid c,X=x+\delta]-\E_P[Y\mid c,\dob(x+\delta)]\big)-\big(\E_P[Y\mid c,X=x]-\E_P[Y\mid c,\dob(x)]\big),
\]
an identity that defines $\mathcal{B}$. Let $C\subseteq c$ be the variables $f$ conditions on. If $C$ satisfies the back-door criterion relative to $(X_{\mathrm{fut}},Y_{\mathrm{fut}})$---no element of $C$ is a descendant of $X_{\mathrm{fut}}$, and $C$ blocks every path from $X_{\mathrm{fut}}$ to $Y_{\mathrm{fut}}$ that starts with an arrow into $X_{\mathrm{fut}}$---then Pearl's back-door theorem \citep[Thm.~3.3.2]{pearl2009causality} gives $P(Y\mid C,\dob(u))=P(Y\mid C,X=u)$ for every $u$, hence $\mathcal{B}=0$. Conditioning on the past of the process never creates descendants of $X_{\mathrm{fut}}$, so the first condition holds for any $C\subseteq c$. For the model \eqref{eq:scm} under \Vz, $C$ contains no $Z$ and the path $X_{\mathrm{fut}}\leftarrow Z\rightarrow Y_{\mathrm{fut}}$ is open whenever $\alpha\gamma\neq0$; under \Vo, the future of $Z$ along the horizon is not in $C$ and the same path through $Z_{L+h-1}$, $h\ge1$, is open; under \Vt\ the path is blocked by the known-future $Z$. The linear-Gaussian expressions for $\mathcal{B}$ in the open cases are the terms derived in \cref{prop:cg}. \qed

\subsection{Proof of \cref{prop:cg}}
We work with population (large-$L$) regressions; stationarity of \eqref{eq:scm} holds for $|\rho_\cdot|<1$, and sample moments converge by ergodicity.

\paragraph{\Vz.}
The predictor regresses $Y_t$ on $W_t=(1,Y_{t-1},X_t)$. Substituting \eqref{eq:scm}, $Y_t=W_t\theta+\gamma Z_{t-1}+\varepsilon^y_t$ with $\theta=(0,\rho_y,\beta)$. The noise $\varepsilon^y_t$ is independent of $W_t$ (it enters neither $Y_{t-1}$ nor $X_t$), so the population coefficient vector is $\theta+\gamma\,\Pi$, where $\Pi=\E[W_tW_t^\top]^{-1}\E[W_tZ_{t-1}]$ is the coefficient vector of the regression of the omitted $Z_{t-1}$ on $W_t$ (omitted-variable formula; \citealp{angrist2009mostly}). Its $X_t$ component is $\pi_X$, which by the Frisch--Waugh--Lovell theorem \citep{lovell1963seasonal} equals the simple regression coefficient of $Z_{t-1}$ on the residual of $X_t$ after projection on $(1,Y_{t-1})$. Hence $\bone=\beta+\gamma\pi_X(\text{regime})$. In the observational regime $\pi_X>0$ because $X_t$ loads on $Z_{t-1}$ with coefficient $\alpha$. In the randomized regime $\Cov(X_t,Z_{t-1})=0$, but $\Cov(Y_{t-1},Z_{t-1})\neq0$ (through $\gamma Z_{t-2}$ and $\rho_z$) and $\Cov(X_t,Y_{t-1})\neq0$ (through $\rho_x$ and $\beta$), so the partial coefficient $\pi_X(\mathrm{int})$ is nonzero and negative: controlling for $Y_{t-1}$ induces a correlation between $X_t$ and $Z_{t-1}$. The confounding gap is therefore $\gamma[\pi_X(\mathrm{obs})-\pi_X(\mathrm{int})]$; omitting the randomized-regime term overstates the predicted gap (slope 1.15 instead of 1.00 in our numerical check).

\paragraph{\Vo.}
The predictor regresses $Y_t$ on $(1,Y_{t-1},X_t,Z_{t-1})$. Since $\varepsilon^y_t$ is independent of all four regressors, the population coefficients are exactly $(0,\rho_y,\beta,\gamma)$ in both regimes. To forecast the horizon it needs the future of $Z$, which is not supplied; the reference predictor imputes it by the in-context linear projection of $Z_{t-1}$ on the contemporaneous $X_t$, $\hat Z=g\,X$ with $g=\Cov(Z_{t-1},X_t)/\Var(X_t)$, so that the response of the forecast to a unit shift of the supplied $X$ path is $\beta+\gamma g$. In the randomized regime $X_t$ is independent of $Z_{t-1}$, $g(\mathrm{int})=0$, and the response is $\beta$; in the observational regime $g(\mathrm{obs})=\alpha\Var(Z)/\Var(X)\neq0$. Hence $\CG=\gamma g(\mathrm{obs})$. This is the abduction step of counterfactual reasoning \citep[Thm.~7.1.7]{pearl2009causality} performed with the observational, rather than the interventional, model of $Z$: the intervention on $X$ is treated as evidence about $Z$. The imputation rule is a modelling choice for the reference predictor; empirically the TSFMs' \Vo\ gaps coincide with their \Vz\ gaps (\cref{tab:e1}), so the models do not use past-only covariates for adjustment at all.

\paragraph{\Vt.}
The same regression is used, and the horizon values of $Z$ are supplied and identical in both arms of the query (they are unaffected by $\dob(X)$). The forecast difference is then $\beta\delta$ at the first intervened step in both regimes, so $\bone=\beta$ and $\CG=0$. \qed

\paragraph{Numerical verification.}
An explicit least-squares forecaster implementing the three rules reproduces the proposition on the full grid with slope $0.999/1.000$ and $R^2=1.000$ under \Vz/\Vo, and $\max|\CG|=0.008$ under \Vt\ (\cref{fig:refline}, right panel; \texttt{sim/theory\_check.py}). The verification also caught a harness error before any foundation model was run: the originally preregistered cumulative coefficient mixes the contemporaneous coefficient with biased autoregressive dynamics under \Vz\ and does not satisfy the proposition (slope 0.05 for the control); the impact coefficient does (amendment A1).

\subsection{Proof of \cref{prop:shrink}}
After partialling out the other regressors (Frisch--Waugh--Lovell), the coefficient on the control is a univariate regression of residualised $Y$ on residualised $X$, $\tilde x$, with $\sum_t\tilde x_t^2=Lv$. Ridge regression with penalty $\lambda$ on that coordinate gives $b_\lambda=\sum\tilde x_t\tilde y_t/(Lv+\lambda)=b_{\mathrm{OLS}}\cdot v/(v+\lambda/L)$ \citep{hoerl1970ridge}. The factor is in $(0,1)$, increasing in $L$ and in $v$, and tends to 1. Under randomized context $b_{\mathrm{OLS}}\to\beta$, so $S=b_\lambda/\beta<1$ with the stated monotonicity; under observational context $b_{\mathrm{OLS}}\to\beta+\text{bias}$ with the bias of \cref{prop:cg}, which does not depend on $L$. \qed

\subsection{Proof of \cref{prop:static} and a general corollary}
\label{app:static}
Regress $Y_t=\rho_yY_{t-1}+\beta X_t+\gamma Z_{t-1}+\varepsilon^y_t$ on $(1,X_t)$ alone. Both $Y_{t-1}$ and $Z_{t-1}$ are omitted, and $\varepsilon^y_t$ is independent of $X_t$, so the population slope is $\beta$ plus each omitted term's coefficient times its regression on $X_t$: $b_{\mathrm{s}}=\beta+\gamma\Cov(Z_{t-1},X_t)/\Var(X_t)+\rho_y\Cov(Y_{t-1},X_t)/\Var(X_t)$. Under the randomized regime the first ratio is zero but the second is not ($Y_{t-1}$ and $X_t$ are correlated through $\rho_x$ and $\beta$), so $b_{\mathrm{s}}(\mathrm{int})\neq\beta$; the gap subtracts the two regimes. Monte Carlo on $2\times10^5$-step series reproduces the formula to three decimals at every grid cell. The ratio $\CG_{\mathrm{s}}/\CG$ is computed on the regenerated grid contexts (\texttt{paper/static\_reference.py}); it is $2.1$ under \Vz\ and $2.6$ under \Vo\ for every confounded cell. \qed

\begin{corollary}[Zero gap under adjustment, general structural models]
\label{cor:general}
Let the data be generated by any structural model, and let $C$ be the set of context and horizon variables a predictor conditions on. If $C$ contains no descendant of $X_{\mathrm{fut}}$ and blocks every back-door path from $X_{\mathrm{fut}}$ to $Y_{\mathrm{fut}}$ in both regimes, then the Bayes-optimal predictor on $C$ has $\CG=0$.
\end{corollary}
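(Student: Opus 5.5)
The plan is to reduce \cref{cor:general} to the back-door step already used in the proof of \cref{prop:bayes}, applied in each regime separately, and then to use the fact that the interventional contrast does not depend on the regime.

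\textbf{Step 1: each regime separately.} Fix a regime $r\in\{\mathrm{obs},\mathrm{int}\}$ with law $P_r$. The Bayes-optimal predictor on $C$ is $f_r(C,x)=\E_{P_r}[Y_{\mathrm{fut}}\mid C,X_{\mathrm{fut}}=x]$. The hypotheses are that $C$ contains no descendant of $X_{\mathrm{fut}}$ and blocks every back-door path in the graph of $P_r$. These are exactly the conditions of Pearl's back-door theorem \citep[Thm.~3.3.2]{pearl2009causality}. It gives $P_r(Y_{\mathrm{fut}}\mid C,\dob(u))=P_r(Y_{\mathrm{fut}}\mid C,X_{\mathrm{fut}}=u)$ for every $u$ in the support. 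So the confounding term $\mathcal{B}$ of \cref{prop:bayes} vanishes, and
\[
f_r(C,x+\delta)-f_r(C,x)=\E_{P_r}[Y_{\mathrm{fut}}\mid C,\dob(x+\delta)]-\E_{P_r}[Y_{\mathrm{fut}}\mid C,\dob(x)].
\]

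\textbf{Step 2: the interventional contrast is regime-invariant.} The randomized regime differs from the observational one only in the mechanism generating $X$, i.e.\ the edges into $X$ are cut. That mechanism is replaced under $\dob(X_{\mathrm{fut}})$. For $C$-strata that are well defined in both regimes, the interventional conditional $P_r(Y_{\mathrm{fut}}\mid C,\dob(u))$ therefore depends only on the structural equations of the non-$X$ variables, which the two regimes share. This step uses that $C$ contains no descendant of $X_{\mathrm{fut}}$, so conditioning on $C$ does not reintroduce the $X$ mechanism.

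There is one subtlety. The past values $X_{1:L}$ may belong to $C$, and their own mechanism also differs between regimes. I would handle this by noting that the back-door criterion holds in both graphs. The claim needed is that the contrast is determined by the modular equations for $Y_{\mathrm{fut}}$ and its non-$X$ ancestors, given $C$. This is the same modularity argument by which \cref{prop:cg} gets $\bone=\beta$ in both regimes under \Vt.

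\textbf{Step 3: take expectations.} Write $\tau(C)$ for the common interventional contrast, divided by $\delta$ and read at the first intervened step. Then $\E_{\mathrm{obs}}[\bone]-\E_{\mathrm{int}}[\bone]=\E_{\mathrm{obs}}[\tau(C)]-\E_{\mathrm{int}}[\tau(C)]$.

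This is zero when $\tau$ is constant in $C$. That covers the linear case, and more generally any model with additive effect of $X$ on $Y$. For the general statement I would define $\CG$ conditionally on matched contexts, as Definition~\ref{def:cg} intends with ``all else equal''. Alternatively one can note that the distribution of the adjustment variables is unaffected by the regime when they are non-descendants of $X$. Here the $X$-independent parts of $C$ are meant, with $X_{1:L}$ entering only through the conditioning.

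\textbf{Main obstacle.} The main obstacle is Step 2, together with the averaging in Step 3. Equality of the interventional contrast across regimes needs a precise invariance statement for $C$-specific effects when $C$ includes past $X$ values. It also needs positivity, so the conditionals are defined in both regimes. I would first try to prove it by truncated factorization: write $P_r(Y_{\mathrm{fut}},C\mid\dob(u))$ as a product over non-$X_{\mathrm{fut}}$ mechanisms, and show that the past-$X$ factors cancel between numerator and denominator when conditioning on $C$.

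If this cancellation fails in full generality, I would restate the corollary with $\CG$ read conditionally on matched $C$. Under that reading the claim follows directly from Steps 1 and 2.
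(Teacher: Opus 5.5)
\paragraph{Verdict.} Your Steps~1 and~2 are the paper's whole proof: it applies \cref{prop:bayes} in each regime to get $\mathcal{B}=0$, asserts that the $C$-specific interventional law of $Y_{\mathrm{fut}}$ does not depend on the $X$ mechanism ``because the $\dob$ operator removes it'', and stops at ``the two contrasts coincide''. The two obstacles you flag are real gaps in that argument. The corollary as written is in fact false, so neither your route nor the paper's can close it.

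\paragraph{Averaging (your Step~3).} Your $\tau$ is really $\tau(C,x_{\mathrm{fut}})$, because the query shifts the \emph{factual} path, and the law of that path depends on the regime. Take $Z,\eta,\varepsilon\sim N(0,1)$, with $X=Z+\eta$ (observational) or $X=\eta$ (randomized), $Y=X^3+Z+\varepsilon$, and $C=\{Z\}$. The back-door criterion holds in both regimes, and the Bayes-optimal predictor is $x^3+z$ in both, so the conditional contrasts coincide exactly. Yet $\bone=3x^2+3x\delta+\delta^2$, so $\CG=3(\E_{\mathrm{obs}}[X^2]-\E_{\mathrm{int}}[X^2])=3$. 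Two consequences for your proposal:
\begin{itemize}
\item ``Additive effect'' must mean \emph{linear} in $X_{\mathrm{fut}}$. The separable term $1.5\beta\tanh(X_t)$ of E11 already gives a nonzero gap, because $X_{L+1}$ is more dispersed in the observational regime.
\item Your alternative via the regime-invariant law of the non-$X$ part of $C$ works only if $\tau$ depends on that part alone.
\end{itemize}

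\paragraph{Invariance (your Step~2).} Let an exogenous $U\sim N(0,1)$ enter $X_1$ only in the observational regime ($X_1=U+\eta_1$ versus $X_1=\eta_1$). Let $X_t=\rho_xX_{t-1}+\eta_t$ for $t\ge2$ in both regimes, $Y_{\mathrm{fut}}=UX_{\mathrm{fut}}+\varepsilon$, and $C=X_{1:L}$. The only arrow into $X_{\mathrm{fut}}$ comes from $X_L\in C$, which is a non-collider on every back-door path, so the hypotheses hold. But the contrast is $\delta\,\E_r[U\mid C]$: this is $\delta x_1/2$ in the observational regime and $0$ in the randomized one. So your fallback (read $\CG$ on matched $C$, ``then it follows from Steps~1 and~2'') fails exactly when the cancellation fails. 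The matched reading repairs Step~3, not Step~2.

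\paragraph{How to repair it.} Your truncated-factorization plan is the right tool, and it tells you which hypothesis to add. Under $\dob(X_{\mathrm{fut}})$ the only regime-dependent factors left are $P_r(x_t\mid\mathrm{pa}_r(X_t))$ for the non-intervened $X_t$. They drop out of $P_r(Y_{\mathrm{fut}}\mid C,\dob(u))$ when, after the variables outside $C$ are summed out, they separate from the factor of $Y_{\mathrm{fut}}$. Graphically:
\begin{itemize}
\item add a regime node $F$ with an arrow into every $X_t$;
\item delete the arrows into $X_{\mathrm{fut}}$;
\item require $C\cup\{X_{\mathrm{fut}}\}$ to d-separate $Y_{\mathrm{fut}}$ from $F$.
\end{itemize}
This condition is not implied by the back-door criterion. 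The path $F\to X_1\leftarrow U\to Y_{\mathrm{fut}}$, opened by conditioning on the collider $X_1$, is invisible to it.

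With this hypothesis, positivity, and either a contrast constant in $(C,x_{\mathrm{fut}})$ or the matched-context reading of $\CG$, your Steps~1--3 are a complete proof. Both conditions hold for \eqref{eq:scm} under \Vt. There $(X_L,Y_L,Z_L)$ and the supplied $Z$ path screen $Y_{\mathrm{fut}}$ off from $F$, and the first-step contrast is $\beta\delta$. So the repaired statement covers the paper's linear setting, but not ``any structural model''.
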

\begin{proof}
By \cref{prop:bayes}, $\mathcal{B}=0$ in each regime, so the what-if contrast equals the interventional contrast $\E[Y\mid C,\dob(x+\delta)]-\E[Y\mid C,\dob(x)]$ in each regime. The interventional distribution of $Y_{\mathrm{fut}}$ given $C$ and $\dob(X_{\mathrm{fut}})$ does not depend on the mechanism generating $X$ (the $\dob$ operator removes it), hence is the same in the observational and the randomized regime, and the two contrasts coincide.
\end{proof}
The corollary is what the \Vt\ column of \cref{fig:refline} and the \Vt\ panel of \cref{fig:nonlin} test: a zero gap is the prediction for \emph{any} data-generating process once the confounder's future is supplied, linear or not.

\begin{table}[H]
\caption{\textbf{Dynamic versus static reference.} Slope, $R^2$ and RMSE of each model's measured impact gap on the \cref{prop:cg} (dynamic) and \cref{prop:static} (static) references over the confounded cells; last column: ratio of the two references.}
\label{tab:static}
\vskip 0.1in
\centering\small
\begin{tabular}{llcccccc}
\toprule
Model & Vis. & slope (dyn.) & $R^2$ & RMSE & slope (static) & $R^2$ & RMSE \\
\midrule
Chronos-2 & \Vz & 1.64 & 0.99 & 0.40 & 1.17 & 0.95 & 0.12 \\
Chronos-2 & \Vo & 1.78 & 0.83 & 0.46 & 1.07 & 0.94 & 0.10 \\
TiRex-2 & \Vz & 1.34 & 1.00 & 0.23 & 0.95 & 0.94 & 0.15 \\
TiRex-2 & \Vo & 1.56 & 0.83 & 0.35 & 0.94 & 0.94 & 0.15 \\
TabPFN-TS & \Vz & 1.00 & 0.98 & 0.04 & 0.71 & 0.84 & 0.40 \\
TimesFM-2.5 + XReg & \Vz & 1.26 & 0.90 & 0.40 & 1.01 & 1.00 & 0.01 \\
TimesFM-2.5 + XReg & \Vo & 1.62 & 0.96 & 0.54 & 1.01 & 1.00 & 0.01 \\
In-context OLS & \Vz\ / \Vo & 1.00 & 1.00 & 0.00 & 0.71 / 0.55 & 0.93 & 0.36 / 0.45 \\
\bottomrule
\end{tabular}
\end{table}

\paragraph{Scope of \cref{prop:shrink}.}
The proposition explains the monotone rise of $S$ with $L$ under randomized context (\cref{fig:shrink}) and the invariance of the gap. The dependence on dose $\delta$ that we observe ($\bint$ 0.72$\to$0.94 from $\delta=0.5\sigma$ to $4\sigma$) is not a property of linear ridge regression and indicates a nonlinear or quantised response in the models.

% ---------------------------------------------------------------------
\section{Experimental Details}
\label{app:details}

\paragraph{Generator.}
Equation \eqref{eq:scm} with a burn-in of 200 steps; context length $L$ and horizon $H$ are simulated continuously, the horizon twice with shared noise (factual and $\dob$ arm), which gives the exact level-three counterfactual effect path \citep{bareinboim2022hierarchy}. The intervention adds $\delta$ to the factual future path of $X$ from the onset onward (onset 0 unless stated); $Z$'s future is verified unchanged. Default parameters: $\beta=1$, $\rho_z=0.6$, $\rho_x=\rho_y=0.5$, unit noise.

\paragraph{Models and interfaces.}
Chronos-2 (\texttt{amazon/chronos-2}, \texttt{predict\_df} with \texttt{context\_df}/\texttt{future\_df}; past-only covariates appear in the context only, known-future covariates in both). TiRex-2 (\texttt{tirex-2} 0.3.1; tensors target $[1,T]$, past covariates $[V_p,T]$, future covariates $[V_f,T{+}H]$; median = quantile index 4). TabPFN-TS (\texttt{tabpfn-time-series} 1.3.0, local mode; covariates present in the context but absent from the future frame are dropped, so \Vo$\equiv$\Vz). In-context OLS: the three rules of \cref{prop:cg} on the context, applied along the horizon with an AR(1) carry. All models forecast both arms from the same context; the impact coefficient uses the first horizon step.

\paragraph{E2--E3.}
E2: $\beta=0$, $\alpha,\gamma\in\{0.5,1,1.5\}^2$, 600 episodes per cell; the false-positive rate is the fraction of cells whose 95\% bootstrap CI of the mean effect excludes 0; because $n$ is large we also report the magnitude. E3: $L\in\{128,512,2048\}$ and $\delta\in\{0.5,1,4\}\sigma$ on the same $3\times3$ cells; anticipation uses onset $H/2=32$ and the leak ratio $\lVert\Delta\hat y_{<\text{onset}}\rVert^2/\lVert\Delta\hat y_{\ge\text{onset}}\rVert^2$.

\paragraph{E4 DoTime.}
\texttt{dotime} 0.2.0, suite \texttt{dot-Identifiability-v1} 1.2.0, \texttt{evaluate(dir\_target="effect")}, diverged episodes dropped as in the official protocol; forecasters are wrapped as observational-context baselines that receive the intervened path as a known-future covariate.

\paragraph{E6 Causal Chambers.}
\texttt{wt\_walks\_v1} (experiments \texttt{actuators\_random\_walk\_*} and \texttt{loads\_hatch\_mix\_*}): $X=$\,\texttt{load\_in}, $Z=$\,\texttt{load\_out}, $Y=$\,\texttt{pressure\_downwind}$-$\texttt{pressure\_ambient} (relative pressure removes barometric drift; pooled $R^2$ 0.61 against 0.005 for absolute pressure). The second wind-tunnel configuration swaps the roles ($X=$\,\texttt{load\_out}, $Z=$\,\texttt{load\_in}; identified effect $-20.1$). \texttt{lt\_walks\_v1} (\texttt{actuators\_white}, \texttt{color\_mix}): $X=$\,\texttt{red}, $Z=$\,\texttt{green}, $Y=$\,\texttt{vis\_2}, additional adjustment for \texttt{blue} and the six polariser/LED settings (pooled $R^2$ 0.976). Windows of $L{+}H=576$ steps with stride 288; $\delta=0.2$ load units and 30 current units respectively (within the actuator range). The identified effect is the pooled static regression on the randomized-actuator data; the in-window reference is the window's own static least squares with and without $Z$, whose difference is the predicted confounding bias. Effects are horizon averages because the physical response is fast relative to the sampling interval. The epidemic counterfactual benchmark \citep{epidemic2026benchmark} listed in the preregistration as a second external source was not obtained in time and remains future work.

\paragraph{Deployment procedure (ETTh1).}
The audit needs no ground truth, only a regime proxy, so it can be run on deployment data. On ETTh1 (target \texttt{OT}, control \texttt{HUFL}, candidate confounder \texttt{HULL}, 59 windows of 576 steps, $\delta=1$\,sd of the control) the in-window predicted confounding bias has standard deviation 0.14 around $-0.01$, against 34 on the wind tunnel: the dataset offers almost no variation in confounding, so the gap is near zero for every model by construction ($|\CG_{\mathrm{real}}|\le0.18$) and the slopes on the predicted bias are dominated by noise. The informative reading is the disagreement in sign between the least-squares effect ($-0.25$) and the pretrained Chronos-2 effect ($+0.08$) at the same visibility. The procedure therefore returns a clear verdict---the what-if answers on this dataset cannot be validated as effects---and a diagnosis of why: without a confounded/unconfounded contrast in the deployment data, the only recourse is an experiment or the synthetic audit of this paper. Script: \texttt{run\_deploy\_demo.py}.

\paragraph{TimesFM-2.5.}
\texttt{timesfm} 3.0.2, \texttt{google/timesfm-2.5-200m-pytorch}, context 1024, continuous quantile head, no positivity inference. Univariate: covariates ignored. XReg: an in-context least-squares layer $Y_t\sim 1+X_t$ (\Vz) or $1+X_t+Z_{t-1}$ (\Vo, \Vt) fitted on the context, TimesFM forecasting the residual, the linear part added back on the horizon; under \Vo\ the future of $Z_{t-1}$ is imputed from $X$ as for the least-squares control.

\paragraph{E11 nonlinear model and KRR reference.}
$X_t=\rho_xX_{t-1}+\alpha\tanh(Z_{t-1})+\varepsilon^x_t$ and $Y_t=\rho_yY_{t-1}+1.5\beta\tanh(X_t)+\gamma(Z_{t-1}+0.3Z_{t-1}^2)+\varepsilon^y_t$, other settings as in \eqref{eq:scm}; $3\times3$ confounded cells, one seed, 200 episodes. The reference fits an RBF kernel ridge regression (median-heuristic bandwidth, ridge 1) of $Y_t$ on standardized $(Y_{t-1},X_t[,Z_{t-1}])$ and forecasts recursively, imputing $Z_{t-1}$ under \Vo\ by a second kernel regression on $X_t$; slopes in \cref{fig:nonlin} are of each model's per-cell gap on the reference's per-cell gap.

\paragraph{E7 matched pairs.}
I.i.d.\ linear-Gaussian models ($\rho=0$). Given $(a,\gamma,\beta,s_x,s_y)=(1,1,1,1,2.5)$ and a target $\beta_B$, we solve for $(\gamma_B,s_{y,B})$ so that $\Var X$, $\Cov(X,Y)$ and $\Var Y$ coincide (verified to the second decimal by Monte Carlo); 200 episodes per member; separation ratio $=(\bone^A-\bone^B)/(\beta_A-\beta_B)$ averaged over $\beta_B\in\{0.5,0.25,0\}$.

\paragraph{E9 probes.}
600 contexts (half per regime; $\alpha,\gamma\in\{0.5,1,1.5\}$; $\beta=0$ with probability $0.25$, else $U[-1.5,1.5]$) embedded with \texttt{Chronos2Pipeline.embed}; features are the target variate's [REG] token, the mean over patches, and their concatenation (reported). For TiRex-2 a forward hook on the post-stack normalisation captures the target variate's hidden states under \Vo; features are the last token, the token mean and their concatenation (last token reported). Logistic regression (regime) and ridge regression (effect, bias) with inner cross-validated regularisation; 5-fold held-out accuracy and $R^2$.

\begin{table}[H]
\caption{\textbf{Linear probes} (5-fold held-out) on pretrained and post-trained encoders of both architectures.}
\label{tab:probe}
\vskip 0.1in
\centering\small
\begin{tabular}{lcccc}
\toprule
Encoder & regime acc. & $\beta$ $R^2$ (rand.) & $\beta$ $R^2$ (obs.) & bias $R^2$ \\
\midrule
Chronos-2 pretrained & 0.96 & 0.80 & 0.71 & $-0.03$ \\
Chronos-2 + \IPT\ (v4) & 0.99 & 0.90 & 0.82 & 0.24 \\
TiRex-2 pretrained & 0.94 & 0.90 & 0.86 & 0.15 \\
TiRex-2 + \IPT\ (one round) & 0.96 & 0.94 & 0.87 & 0.24 \\
raw in-window OLS statistics & 1.00 & 1.00 & -- & -- \\
\bottomrule
\end{tabular}
\end{table}

\paragraph{Interventional post-training.}
3{,}000 series of length $L{+}H=576$ from the randomized regime with $\alpha\sim U[0,1.5]$, $\gamma\sim U[\gamma_{\min},1.5]$, $\beta=0$ with probability $0.25$ else $U[-1.5,1.5]$ (v1: $U[0.5,1.5]$); $X$ as known-future covariate; $Z$ past-only (v2), known-future (v3), or half/half (v4--v14); Chronos-2 \texttt{fit} with the ordinary pretraining loss, 2{,}000 steps, batch 32, AdamW, lr $10^{-5}$ unless stated, full-parameter or LoRA (library default configuration); seed 2026 (plus two further seeds for v4). v9 mixes 2{,}922 real univariate \texttt{m4\_daily} series as replay; v10 trains on observational contexts with a binary known-future \texttt{do} flag and both arms of the shared-noise counterfactual as targets ($\delta\sim U[-2,2]\sigma$); v11 and v13 mix 3{,}000 KernelSynth series \citep{ansari2024chronos} of length 576 (random sums and products of up to five kernels from a bank of periodic, linear, RBF, rational-quadratic, white and constant kernels, one draw from the GP prior each, standardized); v12 replaces half of the interventional series by observational-regime series without intervention; v14 adds $\lambda\sum_j(\theta_j-\theta_j^{\mathrm{pre}})^2$ with $\lambda=10^{-2}$ to the loss; v15--v19 vary the LoRA learning rate ($10^{-4}$ to $2\times10^{-4}$), steps and replay size (1{,}500--4{,}500); v20 draws the training SCMs' noise scales from $U[0.3,3]$ and autoregressive coefficients from $U[0,0.9]$ (v22: $s_x,s_y\sim U[0.5,2]$ only; v23: half of the series so randomized); v21 draws $\gamma\sim U[-1.5,1.5]$. TiRex-2: custom trainer with a pinball loss over the nine quantiles, 1{,}000 steps, batch 16, lr $10^{-5}$, optionally alternating interventional and KernelSynth replay batches.

\paragraph{Forecasting cost.}
MASE \citep{hyndman2006another} on 200 M4-hourly series \citep{makridakis2020m4} with the model's native context, horizon 48, no covariates; reported as mean and median over series of the ratio to the pretrained model. A synthetic univariate observational test set changes by less than 1\% for every variant.

% ---------------------------------------------------------------------
\section{Preregistration and Amendments}
\label{app:prereg}
The preregistration file, the amendment log and all code referenced by file name in this appendix will be released with the final version. The preregistration (\texttt{preregister.yaml}) froze, before any foundation model was run: the grid, the effect coefficient, the four E1 patterns (A associational: slope in $[0.6,1.4]$, $R^2\ge0.7$, $|\CG_{\Vt}|\le0.1\times$ drift; B fully associational: same slope, $|\CG_{\Vt}|\ge0.5\times$ drift; C partially causal: \Vo\ slope $\le0.5$ and $|\bint-\beta|\le0.2$; D unresponsive: $S\le0.1$), the E2 false-positive rule, the E3 leak threshold (0.05), the E4 protocol, and the E5 success rule ($\ge50\%$ gap reduction and $\le5\%$ MASE degradation). Three amendments:
\textbf{A1} (before any grid result): the cumulative coefficient was replaced by the impact coefficient as the primary readout because the least-squares control did not satisfy \cref{prop:cg} under \Vz\ with the former (slope 0.05) and did with the latter (1.000); thresholds unchanged.
\textbf{A2} (after E1): slopes above 1.4 were not anticipated; the frozen verdict (unclassified) is reported, with an exploratory label A$^+$ for slope $>1.4$ and $R^2>0.7$.
\textbf{A3} (after E5): the forecasting-cost script read the wrong axis of Chronos-2's $(1,H,Q)$ quantile output; the pretrained MASE was 6.8 instead of 0.72 and all costs were underestimated; corrected values are used throughout. Variants v1--v10 fail the second success criterion under the corrected values; v18 and v19 (\cref{sec:repair}), designed after the correction, meet both.

\section{Failure-Mode Classifier}
\label{app:classifier}
Inputs per model, from confounded cells: $s_{\Vo}$ (slope of measured on predicted gap under \Vo; \Vz\ for TabPFN-TS), $\rho=\CG_{\Vt}/\CG_{\Vo}$ on matched cells, $d_{\mathrm{int}}=\bint^{\Vt}-\bint^{\Vo}$, $e_{\mathrm{int}}=|\bint^{\Vt}-\beta|$, and the two gaps relative to the predicted drift. Rules, in order: \emph{unresponsive} (preregistered pattern D) if mean sensitivity $S<0.1$; \emph{causal} if $s_{\Vo}<0.5$, $\rho<0.2$, $e_{\mathrm{int}}<0.2$; \emph{near-causal} if both relative gaps $<0.35$ and $e_{\mathrm{int}}<0.2$; \emph{covariate dominance} if $d_{\mathrm{int}}<-0.2$; \emph{no adjustment} if $\rho\ge0.8$; \emph{partial adjustment} if $0.2\le\rho<0.8$; otherwise \emph{near-causal (adjusts)}. The qualifier A$^+$ is attached when $s_{\Vo}>1.4$. \Cref{tab:modes} gives the inputs and labels for the pretrained models and the main repaired model; dither, in-context randomization and v6--v8 remain partial adjustment, v4 seeds 1--2 and v5 are near-causal, TiRex-2 after post-training remains covariate dominance.

\begin{table}[H]
\caption{\textbf{Failure-mode classifier} applied to the E1 grid.}
\label{tab:modes}
\vskip 0.1in
\centering\small
\begin{tabular}{lcccc l}
\toprule
Model & $s_{\Vo}$ & $\rho$ & $d_{\mathrm{int}}$ & $e_{\mathrm{int}}$ & Mode \\
\midrule
In-context OLS & 1.00 & 0.00 & 0.00 & 0.00 & adjusts (\Vo\ associational by construction) \\
Chronos-2 & 1.78 & 0.58 & $+0.16$ & 0.03 & partial adjustment, A$^+$ \\
TabPFN-TS & 1.00 & 0.89 & $+0.01$ & 0.11 & no adjustment \\
TiRex-2 & 1.56 & 1.07 & $\mathbf{-0.34}$ & 0.37 & covariate dominance, A$^+$ \\
TimesFM-2.5 + XReg & 1.62 & 0.88 & $0.00$ & 0.32 & no adjustment, A$^+$ \\
TimesFM-2.5 (univariate) & 0.00 & -- & 0.00 & 1.00 & unresponsive (pattern D) \\
Chronos-2 + \IPT\ (v4) & 0.79 & 0.39 & $+0.05$ & 0.09 & partial adjustment \\
Chronos-2 + \IPT\ (v11) & 0.60 & 0.40 & $-0.05$ & 0.00 & partial adjustment \\
\bottomrule
\end{tabular}
\end{table}

% ---------------------------------------------------------------------
\section{Additional Results}
\label{app:extra}

\begin{table}[H]
\caption{\textbf{Cell-bootstrap 95\% confidence intervals for the impact slopes of \cref{tab:e1}} (2{,}000 resamples of the confounded cells).}
\label{tab:ci}
\vskip 0.1in
\centering\small
\begin{tabular}{llccc}
\toprule
Model & Vis. & cells & slope & 95\% CI \\
\midrule
Chronos-2 & \Vz & 16 & 1.64 & [1.57, 1.71] \\
Chronos-2 & \Vo & 16 & 1.78 & [1.36, 2.29] \\
TiRex-2 & \Vz & 16 & 1.34 & [1.31, 1.36] \\
TiRex-2 & \Vo & 16 & 1.56 & [1.21, 2.03] \\
TabPFN-TS & \Vz & 9 & 1.00 & [0.85, 1.13] \\
In-context OLS & \Vz\ / \Vo & 16 & 1.00 & [1.00, 1.00] \\
\bottomrule
\end{tabular}
\end{table}

\begin{figure}[H]
\centering
\includegraphics[width=0.5\textwidth]{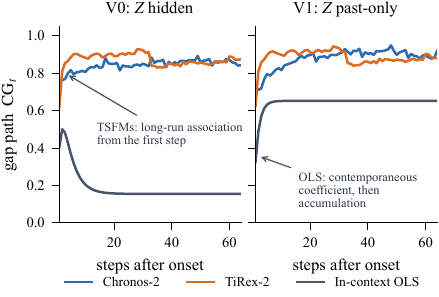}
\caption{\textbf{Gap path over the horizon} $\CG_t=\E_{\mathrm{obs}}[\Delta\hat y_t]/\delta-\E_{\mathrm{int}}[\Delta\hat y_t]/\delta$, averaged over the 16 confounded cells. In-context least squares answers with the contemporaneous coefficient at $t=1$ (0.40 under \Vz, 0.33 under \Vo) and then accumulates through the autoregressive carry (to 0.15 and 0.65). Chronos-2 and TiRex-2 answer with the long-run association from the first step: their gap is 0.6--0.8 at $t=1$ and 0.85--0.9 thereafter. This is why the impact slope exceeds 1 while the cumulative slope is close to 1.}
\label{fig:step}
\end{figure}

\begin{figure}[H]
\centering
\includegraphics[width=0.5\textwidth]{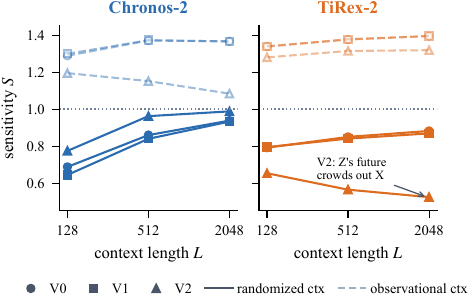}
\caption{\textbf{Sensitivity versus context length.} Under randomized context $S$ rises monotonically toward 1 with $L$ (\cref{prop:shrink}); under observational context it stays at 1.3--1.4 at every $L$: the confounding does not wash out. TiRex-2 under \Vt\ moves the other way as the known-future covariate increasingly dominates.}
\label{fig:shrink}
\end{figure}

\begin{table}[H]
\caption{\textbf{E3 sensitivity versus context length} (randomized context, $3\times3$ cells, seed 0).}
\vskip 0.1in
\centering\small
\begin{tabular}{llccc}
\toprule
Model & Vis. & $L=128$ & $L=512$ & $L=2048$ \\
\midrule
Chronos-2 & \Vz / \Vo / \Vt & 0.69 / 0.64 / 0.77 & 0.86 / 0.84 / 0.96 & 0.94 / 0.93 / 0.99 \\
TiRex-2 & \Vz / \Vo / \Vt & 0.79 / 0.80 / 0.65 & 0.85 / 0.84 / 0.56 & 0.88 / 0.87 / 0.52 \\
\bottomrule
\end{tabular}
\end{table}

\begin{table}[H]
\caption{\textbf{E8 transfer of \IPT\ outside the training prior} (\Vo\ / \Vt): full-parameter v4 (top) and the low-rank recipe v19 (bottom).}
\label{tab:ood}
\vskip 0.1in
\centering\small
\resizebox{\textwidth}{!}{%
\begin{tabular}{llcccc}
\toprule
Cell & Relation to training prior & pretrained $\CG$ & \IPT\ $\CG$ & reduction & effect RMSE (obs.) \\
\midrule
in-distribution $3\times3$ & -- & 0.98 / 0.55 & 0.22 / 0.10 & 78\% / 82\% & 0.76 $\to$ 0.15 \\
$\alpha,\gamma\in\{2,2.5\}$ & outside ($\le1.5$ in training) & 1.62 / 1.10 & 0.35 / 0.24 & \textbf{78\% / 79\%} & 1.11 $\to$ 0.23 \\
$\beta=0.5$ & inside & 0.83 / 0.37 & 0.21 / 0.15 & 74\% / 60\% & 0.63 $\to$ 0.20 \\
$\delta=4\sigma$ & intervention magnitude outside & 0.97 / 0.65 & 0.35 / 0.30 & 64\% / 54\% & 0.89 $\to$ 0.13 \\
$\beta=-1$ & inside, but confounding sign fixed positive & 0.38 / 0.39 & 0.33 / 0.22 & \textbf{15\% / 44\%} & 0.63 $\to$ 0.46 \\
$\beta=-1$, v5 (signed $\gamma$) & -- & 0.38 / 0.39 & 0.54 / 0.33 & $-40\%$ / $+15\%$ & -- \\
\midrule
$\alpha,\gamma\in\{2,2.5\}$, v19 & outside & 1.61 / 1.10 & 0.62 / 0.43 & 62\% / 61\% & -- \\
$\delta=4\sigma$, v19 & outside & 0.97 / 0.65 & 0.64 / 0.54 & 34\% / 17\% & -- \\
$\beta=-1$, v19 & sign & 0.38 / 0.39 & 0.41 / 0.26 & $-6\%$ / 34\% & 0.63 $\to$ 0.46 ($\bobs$ $-0.42\to-0.55$, OLS $-0.58$; $\bint$ $-0.80\to-0.95$) \\
$L=2048$, v19 & -- & 0.85 / 0.39 & 0.39 / 0.08 & 54\% / 79\% & -- \\
\bottomrule
\end{tabular}}
\end{table}

\begin{table}[H]
\caption{\textbf{E1 in full}: impact and cumulative coefficients.}
\label{tab:e1full}
\vskip 0.1in
\centering\small
\setlength{\tabcolsep}{4pt}
\resizebox{\textwidth}{!}{%
\begin{tabular}{llccccccc}
\toprule
Model & Vis. & impact slope ($R^2$) & cumul.\ slope ($R^2$) & mean $\CG$ & $\bint$ & $S_{\mathrm{int}}$ & $S_{\mathrm{obs}}$ & cells \\
\midrule
In-context OLS & \Vz & 1.00 (1.00) & $-0.10$ (0.05) & 0.41 & 0.92 & 1.27 & 1.36 & 25 \\
 & \Vo & 1.00 (1.00) & 1.00 (1.00) & 0.33 & 1.00 & 1.00 & 1.32 & 25 \\
 & \Vt & -- & 0.88 (0.70) & 0.00 & 1.00 & 1.00 & 1.00 & 25 \\
Chronos-2 & \Vz & 1.64 (0.99) & 0.98 (0.64) & 0.77 & 0.84 & 0.89 & 1.31 & 25 \\
 & \Vo & 1.78 (0.83) & 1.58 (0.94) & 0.73 & 0.81 & 0.88 & 1.31 & 25 \\
 & \Vt & -- & -- & 0.40 & 0.97 & 0.97 & 1.09 & 25 \\
TiRex-2 & \Vz & 1.34 (1.00) & 0.79 (0.73) & 0.62 & 0.95 & 0.87 & 1.30 & 25 \\
 & \Vo & 1.56 (0.83) & 1.19 (0.98) & 0.62 & 0.97 & 0.86 & 1.30 & 25 \\
 & \Vt & -- & -- & 0.77 & 0.63 & 0.65 & 1.22 & 25 \\
TabPFN-TS & \Vz & 1.00 (0.98) & 0.76 (0.97) & 0.57 & 0.88 & 0.55 & 1.08 & 9 \\
 & \Vt & -- & -- & 0.50 & 0.89 & 0.56 & 1.09 & 9 \\
TimesFM-2.5 + XReg & \Vz & 1.26 (0.90) & 0.64 (0.90) & 0.95 & 1.32 & 0.67 & 1.15 & 9 \\
 & \Vo & 1.62 (0.96) & 0.82 (0.96) & 0.95 & 1.32 & 0.67 & 1.15 & 9 \\
 & \Vt & -- & -- & 0.78 & 1.32 & 0.67 & 1.15 & 9 \\
TimesFM-2.5 (univariate) & all & -- & -- & 0.00 & 0.00 & 0.00 & 0.00 & 9 \\
\bottomrule
\end{tabular}}
\end{table}

\begin{table}[H]
\caption{\textbf{E2 null effect} ($\beta=0$; mean spurious impact coefficient over nine cells; false-positive rate = cells whose 95\% CI excludes 0). Randomized-context false positives with $|\text{effect}|\le0.04$ are trivially significant at $n=600$ \citep{wasserstein2016asa}; the magnitude column is the informative one.}
\label{tab:null}
\vskip 0.1in
\centering\footnotesize
\setlength{\tabcolsep}{4pt}
\begin{tabular}{llccccc}
\toprule
Model & Vis. & observational & least-squares pred. & randomized & FP obs. & FP rand. \\
\midrule
Chronos-2 & \Vo & 0.47 & 0.42 & 0.00 & 100\% & 11\% \\
Chronos-2 & \Vt & 0.16 & 0.00 & 0.00 & 100\% & 56\% \\
TiRex-2 & \Vo & 0.48 & 0.42 & 0.00 & 100\% & 0\% \\
TiRex-2 & \Vt & 0.31 & 0.00 & $-0.01$ & 100\% & 78\% \\
TabPFN-TS & \Vt & 0.44 & 0.00 & 0.00 & 100\% & 11\% \\
TimesFM-2.5 + XReg & \Vo & 0.72 & 0.42 & $-0.01$ & -- & -- \\
TimesFM-2.5 + XReg & \Vt & 0.31 & 0.00 & 0.00 & -- & -- \\
Chronos-2 + \IPT\ v1 ($\beta>0$ prior) & \Vo & 0.31 & 0.42 & \textbf{0.24} & 100\% & 100\% \\
Chronos-2 + \IPT\ v2 & \Vo & 0.28 & 0.42 & 0.00 & 100\% & 89\% \\
Chronos-2 + \IPT\ v4 & \Vo & 0.25 & 0.42 & 0.04 & 100\% & 100\% \\
Chronos-2 + \IPT\ v4 & \Vt & 0.19 & 0.00 & $-0.01$ & 100\% & 89\% \\
Chronos-2 + \IPT\ v5 & \Vo & 0.35 & 0.42 & 0.12 & 100\% & 89\% \\
Chronos-2 + v10 (do-flag) & \Vo & 0.42 & 0.42 & 0.00 & 100\% & 100\% \\
Chronos-2 + \IPT\ v11 (KernelSynth replay) & \Vo\ / \Vt & 0.32 / 0.21 & 0.42 / 0.00 & 0.06 / 0.03 & -- & -- \\
Chronos-2 + \IPT\ v13 (LoRA + replay) & \Vo\ / \Vt & 0.43 / 0.22 & 0.42 / 0.00 & 0.05 / 0.02 & -- & -- \\
Chronos-2 + \IPT\ v19 (LoRA + replay 4{,}500) & \Vo\ / \Vt & 0.41 / 0.23 & 0.42 / 0.00 & 0.08 / 0.01 & -- & -- \\
\bottomrule
\end{tabular}
\end{table}

\begin{figure}[H]
\centering
\includegraphics[width=0.48\textwidth]{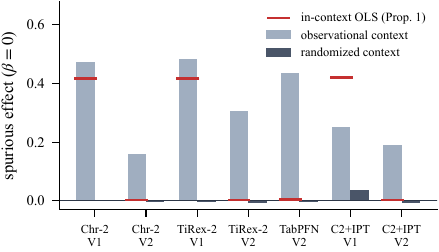}\hfill
\includegraphics[width=0.48\textwidth]{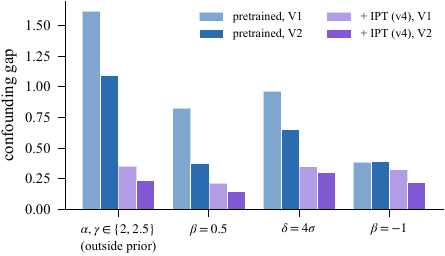}
\caption{Left: E2 spurious effects. Right: E8 out-of-prior transfer of post-training (confounding gap before and after \IPT).}
\end{figure}

\begin{table}[H]
\caption{\textbf{E3 anticipation leak} (onset $H/2$; ratio of pre-onset to post-onset response energy; the preregistered claim requires a CI lower bound $>0.05$). No systematic leak; heavy right tail under randomized context.}
\vskip 0.1in
\centering\small
\begin{tabular}{llccc}
\toprule
Model & Regime & \Vz\ median (97.5\%) & \Vo\ median (97.5\%) & \Vt\ median (97.5\%) \\
\midrule
Chronos-2 & obs. & 0.002 (0.006) & 0.001 (0.006) & 0.003 (0.030) \\
Chronos-2 & rand. & 0.011 (0.331) & 0.012 (0.495) & 0.013 (0.065) \\
TiRex-2 & obs. & 0.001 (0.004) & 0.001 (0.004) & 0.001 (0.008) \\
TiRex-2 & rand. & 0.008 (0.071) & 0.008 (0.068) & 0.035 (1.796) \\
\bottomrule
\end{tabular}
\end{table}

\begin{table}[H]
\caption{\textbf{E4 DoTime dot-Identifiability-v1} (12{,}150 episodes; effect-direction accuracy, NMSE, and direction accuracy on key structures).}
\label{tab:e4}
\vskip 0.1in
\centering\small
\setlength{\tabcolsep}{3.5pt}
\resizebox{\textwidth}{!}{%
\begin{tabular}{lcccccc}
\toprule
Method & dir.\ acc. & NMSE & back-door & front-door & IV & bow \\
\midrule
Zero / Mean / AR(1) & 0.53 / 0.53 / 0.53 & 1.03 / 0.68 / 0.95 & 0.50--0.53 & 0.51--0.53 & 0.51--0.54 & 0.54--0.56 \\
Naive OLS (no adjustment) & \textbf{0.77} & 0.57 & 0.79 & 0.70 & 0.81 & 0.81 \\
Back-door OLS & 0.60 & 0.63 & \textbf{0.81} & 0.53 & 0.54 & 0.54 \\
VAR-OLS / IV-2SLS & 0.53 / 0.55 & 0.82 / 0.67 & 0.52 / 0.53 & 0.54 / 0.53 & 0.52 / 0.61 & 0.54 / 0.54 \\
Oracle & 1.00 & 0.00 & 1.00 & 1.00 & 1.00 & 1.00 \\
Chronos-2 & 0.684 & \textbf{0.44} & 0.68 & 0.60 & 0.74 & 0.73 \\
Chronos-2 + \IPT\ v2 / v3 / v4 & 0.69 / 0.70 / 0.71 & 0.44 / 0.46 / 0.45 & 0.72 / 0.71 / 0.72 & 0.62 / 0.61 / 0.63 & 0.73 / 0.73 / 0.74 & 0.74 / 0.75 / 0.76 \\
Chronos-2 + \IPT\ v19 & 0.710 & \textbf{0.43} & 0.73 & 0.61 & 0.76 & 0.76 \\
TiRex-2 & 0.674 & 0.49 & 0.67 & 0.58 & 0.69 & 0.70 \\
\bottomrule
\end{tabular}}
\end{table}

\begin{table}[H]
\caption{\textbf{E6 real systems in full.} Mean horizon-average effect (fraction of the identified effect), slope on the in-window predicted bias (1 associational, 0 adjusting), and RMSE against the identified effect. Light tunnel: the in-window least-squares reference is itself unstable (six unobserved LED/polariser settings and persistent random walks within 512 steps), so the slope reference is weaker than on the wind tunnel; the identified effect from the pooled regression is reliable ($R^2$ 0.976).}
\label{tab:real}
\vskip 0.1in
\centering\small
\setlength{\tabcolsep}{4pt}
\begin{tabular}{l ccc ccc ccc}
\toprule
& \multicolumn{3}{c}{Wind tunnel: effect / truth} & \multicolumn{3}{c}{slope on predicted bias} & \multicolumn{3}{c}{RMSE (truth 16.4)} \\
\cmidrule(lr){2-4}\cmidrule(lr){5-7}\cmidrule(lr){8-10}
Model & \Vz & \Vo & \Vt & \Vz & \Vo & \Vt & \Vz & \Vo & \Vt \\
\midrule
In-context OLS & 0.42 & 0.79 & 0.77 & 0.43 & \textbf{1.00} & \textbf{0.00} & 21.2 & 34.8 & \textbf{7.9} \\
Chronos-2 & 0.33 & 0.41 & 0.47 & 0.07 & 0.01 & $-0.03$ & 13.1 & 11.3 & 10.5 \\
TiRex-2 & 0.18 & 0.19 & 0.22 & 0.04 & 0.04 & 0.01 & 15.9 & 15.2 & 14.1 \\
Chronos-2 + \IPT & 0.65 & 0.63 & 0.73 & 0.41 & 0.22 & 0.03 & 17.9 & 13.6 & 8.3 \\
Chronos-2 + \IPT\ v19 & 0.46 & 0.42 & 0.56 & 0.14 & 0.02 & $-0.03$ & 13.1 & 12.2 & 10.1 \\
\midrule
& \multicolumn{3}{c}{Wind tunnel, roles swapped: effect / truth} & \multicolumn{3}{c}{slope on predicted bias} & \multicolumn{3}{c}{RMSE (truth $-20.1$)} \\
\cmidrule(lr){2-4}\cmidrule(lr){5-7}\cmidrule(lr){8-10}
In-context OLS & -- & 0.92 & 0.90 & -- & \textbf{0.91} & $\mathbf{-0.02}$ & -- & 13.6 & \textbf{8.5} \\
Chronos-2 & 0.65 & 0.72 & 0.77 & 0.17 & 0.06 & 0.00 & 10.7 & 9.7 & 9.1 \\
TiRex-2 & 0.44 & 0.43 & 0.39 & 0.12 & 0.13 & 0.05 & 14.3 & 14.5 & 15.2 \\
Chronos-2 + \IPT\ v19 & 0.69 & 0.83 & 0.85 & 0.20 & 0.09 & 0.02 & 10.8 & 10.0 & 9.1 \\
\midrule
& \multicolumn{3}{c}{Light tunnel: effect / truth} & \multicolumn{3}{c}{slope on predicted bias} & \multicolumn{3}{c}{RMSE (truth 1.87)} \\
\cmidrule(lr){2-4}\cmidrule(lr){5-7}\cmidrule(lr){8-10}
In-context OLS & 1.34 & 1.37 & 1.24 & 0.41 & 0.40 & 0.11 & 5.3 & 5.2 & 6.6 \\
Chronos-2 & 0.56 & 0.74 & 0.84 & 0.04 & 0.04 & 0.00 & 2.2 & 1.8 & \textbf{1.6} \\
Chronos-2 + \IPT & 1.64 & 2.09 & 1.91 & 0.10 & 0.09 & $-0.10$ & 2.9 & 3.1 & 3.5 \\
Chronos-2 + \IPT\ v19 & 1.15 & 1.33 & 1.52 & 0.06 & 0.05 & $-0.05$ & 3.1 & 2.6 & 2.5 \\
Chronos-2 + \IPT\ v20 (wide scale rand.) & 1.16 & 1.19 & 1.19 & 0.02 & 0.03 & $-0.05$ & 2.6 & 2.1 & 2.0 \\
Chronos-2 + \IPT\ v22 ($X,Y$ scale rand.) & 1.06 & 1.28 & 1.41 & 0.05 & 0.06 & $-0.05$ & 3.1 & 2.5 & 2.3 \\
\bottomrule
\end{tabular}
\end{table}

\end{document}